\def\eqref#1{equation~\ref{#1}}
\def\1{\bm{1}}
\DeclareMathAlphabet{\mathsfit}{\encodingdefault}{\sfdefault}{m}{sl}
\SetMathAlphabet{\mathsfit}{bold}{\encodingdefault}{\sfdefault}{bx}{n}
\definecolor{DarkGreen}{rgb}{0.0,0.5,0.0}
\newtheorem{assumption}{Assumption}
\newtheorem{theorem}{Theorem}
\newtheorem{lemma}{Lemma}
\newcommand{\sd}[1]{\scriptsize{#1}}
\DeclarePairedDelimiterX{\infdivx}[2]{[}{]}{%
  #1\;\delimsize|\delimsize|\;#2%
}
\newcommand{\kld}[2]{\ensuremath{D_{KL}\infdivx{#1}{#2}}\xspace}
\newcommand{\abs}[1]{\left \lvert #1 \right \rvert}
\title{On a Benefit of Mask Language Modeling: \\ Robustness to Simplicity Bias}
\author{Ting-Rui Chiang \\
Carnegie Mellon University \\
\texttt{tingruic@andrew.cmu.edu}
}
\begin{document}

\maketitle

\begin{abstract}
Despite the success of pretrained masked language models (MLM), why MLM pretraining is useful is still a qeustion not fully answered.
In this work we theoretically and empirically show that MLM pretraining makes models robust to lexicon-level spurious features, partly answer the question.
We theoretically show that, when we can model the distribution of a spurious feature $\Pi$ conditioned on the context, then (1) $\Pi$ is at least as informative as the spurious feature, and (2) learning from $\Pi$ is at least as simple as learning from the spurious feature.
Therefore, MLM pretraining rescues the model from the simplicity bias caused by the spurious feature.
We also explore the efficacy of MLM pretraing in causal settings.
Finally we close the gap between our theories and the real world practices by conducting experiments on the hate speech detection and the name entity recognition tasks.
\end{abstract}

\section{Introduction}
Large-scale pretrained masked language models (MLM) is to pretrain a model that can predict tokens based on the context.
It has been shown useful for natural language processing (NLP) \citep{devlin-etal-2019-bert,liu2019roberta}.
Especially, \citet{gururangan-etal-2020-dont} shows that continuing the MLM pretraining with unlabeled target data can further improve the performance on downstream tasks.
However, the question "\textit{why is masked language model pretraining useful?}" has not been totally answered.
In this work, as a initial step toward the answer, we show and explain that MLM pretraining makes the model robust to lexicon-level spurious features.

\begin{wrapfigure}{r}{4.5cm}
    \centering
    \includegraphics[width=4cm]{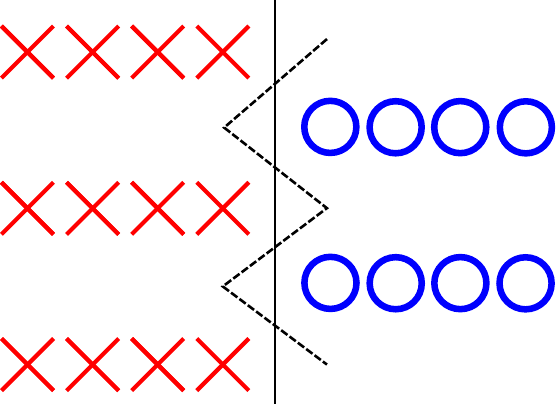}
    \caption{The pitfall of simplicity bias: The solid line is a simple (linear) decision boundary that utilizes only one dimension, while the dashed line is a more complex decision boundary that utilizes two dimensions and maximizes the margin.}
    \label{fig:simplicity-bias}
\end{wrapfigure}

Previous studies have empirically shown the robustness of MLM pretrained models.
\citet{hao-etal-2019-visualizing} show that MLM pretraining leads to wider optima and better generalization capability.
\citet{hendrycks-etal-2020-pretrained} and \citet{tu-etal-2020-empirical} show that pretrained models are more robust to out-of-distribution data and spurious features.
However, it remains unanswered why pretrained models are more robust.

We conjecture that models trained from scratch suffer from the pitfall of simplicity bias \cite{shah2020pitfalls} (Figure~\ref{fig:simplicity-bias}).
\citet{shah2020pitfalls} and \citet{kalimeris2019sgd} show that deep networks tend to converge to a simple decision boundary that involves only few features.
The networks may not utilize all the features and thus may not maximize the margin, which results in worse robustness.
A consequence of this could be that a model may excessively rely on a feature that has spurious association with the label and ignore the other features that are more robust.
While \citet{shah2020pitfalls} and \citet{kalimeris2019sgd} investigate networks with continuous input, in Section~\ref{sec:toy-example} we demonstrate how spurious features cause problems when the inputs are discrete.
Our experimental setting is more related to NLP, where the inputs are discrete.

We start the exploration with the following assumptions:
Let the sentence, label pair be $X, Y$.
\begin{assumption}
\label{assumption:decomposable}
We assume that from $X$, we can extract two features $X_1$ and $X_2$.
\end{assumption}
\begin{assumption}
\label{assumption:spurious}
$X_1$ is a spurious feature that has strong association with $Y$.
Specifically, it means that, solely relying on $X_1$, one can predict with high accuracy, but cannot be 100\% correctly.
\end{assumption}
\begin{assumption}
\label{assumption:deterministic}
$X_2$ is a robust feature based on which $Y$ can be predicted with 100\% accuracy. Namely, there exists a deterministic mapping $f_{X_2 \to Y}$ that maps $X_2$ to $Y$.
\end{assumption}

The assumptions above are realistic in some NLP tasks.
In NLP tasks, the input $X$ is a sequence of tokens.
Some tasks satisfy Assumption~\ref{assumption:decomposable}: $X$ can be decomposed into $X_1$ and $X_2$, where $X_1$ is the presence of certain tokens, and $X_2$ is the context of the token.
As a result, $X_2$ has much higher dimensionality than $X_1$.
Therefore, when Assumption~\ref{assumption:spurious} is true, due to the simplicity bias, a deep model is likely to excessively rely on $X_1$ and to rely on $X_2$ less.
However, if Assumption~\ref{assumption:deterministic} is true, we would desire the model to rely on $X_2$, which contains the semantic of the input $X$.  

With these assumptions, in Section~\ref{sec:informative} and Section~\ref{sec:easy} we provide a theoretical explanation how MLM pretraining makes a model robust to spurious features.
Let $\Pi^{(1)}$ be the conditional probability $P(X_1 | X_2)$.
We show (1) the relation between the mutual information $I(\Pi^{(1)}; Y) \geq I(X_1; Y)$ and that (2) the convergence rate of learning from $\Pi^{(1)}$ is of the same order as learning from $X_1$.
That is, when the MLM model can perfectly model the probability $P(X_1 | X_2)$ and thus generate perfect $\Pi^{(1)}$, learning from $\Pi^{(1)}$ is as easy as learning from $X_1$.
As a result, the model will be more likely to avoid the pitfall of simplicity of bias and to rely on $\Pi^{(1)}$.
Since $\Pi^{(1)}$ is estimated based on $X_2$, higher reliance on $\Pi^{(1)}$ also implies higher reliance on the robust feature $X_2$.
To relax Assumption~\ref{assumption:deterministic}, we make one step further by considering causal settings in Section~\ref{sec:causal}.

The above results partly explain why MLM pretrining is useful for NLP.
Denote a sequence of tokens as $X = \langle X_1, X_2, \cdots, X_L \rangle$.
During the MLM pretraining process, each token is masked randomly at certain probability, and the training objective is to predict the masked tokens with maximum likelihood loss.
As a result, the model is capable of estimating the conditional probability $P(X_i | X \setminus X_i )$ for all $i = 1, 2, \cdots, L$.
Even though which of the tokens is spurious is unknown, as long as the spurious token has non-zero probability to be masked during pretraining, MLM can estimated its distribution conditioned on the context and thus can reduce the reliance on it.

Finally, we close the gap between our theories and reality.
One major gap is that, in reality, we do not use the conditional probability for downstream tasks.
Instead, we feed the input $X$ without masking any token and fine-tune the model along with a shallow layer over its output.
Regardless of that, we hypothesize that the robustness to spurious tokens brought by MLM pretraining still exists.
To prove that, in Section~\ref{sec:toy-example-ptr} we use the toy example and verify the effect of MLM pretraining when using the common practice for fine-tuning.
In Section~\ref{sec:experiments} we validate our theories with two real world NLP tasks.

To sum up, our study leads to new research directions.
Firstly, we provide a new explanation of MLM pretraiing's efficacy.
Unlike the previous purely theoretical studies \cite{saunshi2021a,wei2021pretrained}, our assumptions are milder and more realistic.
Secondly, we study NLP robustness in a new perspective.
Many of the previous studies on robust NLP focus on supervised learning \cite{wang-etal-2021-dynamically,utama-etal-2020-towards,utama-etal-2020-mind,karimi-mahabadi-etal-2020-end,chang2020invariant,he-etal-2019-unlearn,Sagawa2020Distributionally,kennedy-etal-2020-contextualizing}.
However, without self-supervised learning, a model can impossibly extrapolate to out-of-distribution data when the domain shifts.
Our work complement previous studies that focus on the bias or robustness of a model generated by the pretraining process \cite{kumar-etal-2020-nurse,hawkins-etal-2020-investigating,vargas-cotterell-2020-exploring,liu-etal-2020-mitigating,gonen-goldberg-2019-lipstick,kurita-etal-2019-measuring,zhao-etal-2019-gender}. In this work we investigate the pretraining process itself.
Better understanding of pretraining should be important for future research.

\section{A Toy Example}
\label{sec:toy-example}

With the assumptions, the following toy example and experiment will show that spurious features can cause the difficulty of convergence.
Denote the one-hot vector whose $i$th element is $1$ as $e_i$.
Define $\nu < 0.5$ as a flip rate, and let the dimension of the random variables $X_1$ and $X_2$ be $2$ and $d_2$ respectively.
Their value $x_1 \in \{e_1, e_2\}$ and $x_2 \in \{e_1, e_2, \cdots, e_{d_2} \}$.
Let the middle part of $X_2$, the dimensions from $d_2 / 2 - \nu d_2$ to $d_2 / 2 + \nu d_2$, be $\dot{X}_2$.
We consider a joint distribution where $X_1 = Y$ with probability $1 - \nu$, and $X_1 = 1 - Y$ when $\dot{X}_2 \ne 0$.
Specifically, we consider the following random process:

\begin{equation}
    \begin{split}
        & X_2 = e_i, i \sim \mathrm{Uniform}(1, d_2) \\
        & Y = \begin{cases}
            -1 & \text{if $X_2 = e_i$ for some even $i$} \\
            +1 & \text{otherwise}
        \end{cases} \\
        & X_1 = \begin{cases}
            X_2 & \text{If $X_2 = e_i$ for some $i$ such that $\lvert i - d_2 / 2 \rvert > \nu d_2 / 2$} \\
            e_i, i \sim \mathrm{Uniform}(1, 2) & \text{otherwise}
        \end{cases}
    \end{split}
    \label{eq:toy-dist}
\end{equation}
Namely $X_1$ is flipped with probability 0.5 when the index of the none-zero element of $X_1$ is within $d_2 / 2 - \lceil d_2 \nu \rceil$ and $d_2 / 2 + \lceil d_2 \nu \rceil$.
In this case, predicting $Y$ solely based on the spurious feature $X_1$ can only achieve accuracy $1 - \nu$.


We conduct experiments to inspect the effect of the spurious feature $X_1$ in this toy model.
We train linear networks by drawing batches of i.i.d. $([X_1; X_2], Y)$ pairs from the random process defined in \ref{eq:toy-dist}.
We use Adam optimization with learning rate 0.001 and the cross-entropy loss.
In addition to single-layer linear networks, we also try over-parameterized 2-layer and 3-layer linear networks.
The hidden size is [10, 32].
Since it is a linear separable problem, we can check whether the learned weight can lead to 100\% accuracy in the defined distribution.
We check it every 25 iterations.
We say a model has converged if it is 100\% accurate for 5 consecutive checks.
We report the number of the iterations required before it converges for different $\nu$ and $d_2$.

Even though it is a linear-separable convex optimization problem, our results in Table~\ref{tab:toy-exp-ptr} show that the spurious feature can impact the number of iterations required to converge.
We observe that when $\nu < 0.5$, the models tend to be trapped by the spurious feature, sticking at accuracy $1 - \nu$ for iterations.
When the spurious relation between $X_1$ and $Y$ is stronger, i.e. $\nu$ is smaller, the number of iterations required to converge is larger.
In addition, the number of iterations is also larger when the $d_2$ is larger.
An intuitive explanation is that the learning signal from $X_2$ is more sparse when $d_2$ is larger.

\section{A Theoretical Explanation of the Efficacy of MLM Pretraining}

\subsection{$P(X_1| X_2)$ is More Informative Than $X_2$}
\label{sec:informative}

The toy example above motivate us to consider the information contained in $P(X_1 | X_2)$.
In the toy example, when predicting $P(Y = 0| X)$, if we simply output $P(X_1 = (1, 0)| X_2)$, then the accuracy of our prediction of $Y$ will be as high as predicting $Y$ solely based on $X_2$.
It motivates us to inspect the "reliability" of the estimated $P(X_1 | X_2)$ as a feature for the prediction of $Y$ compared to $X_1$.
Let $\Pi_1 = P(X_1 | X_2)$ be a $|\mathcal{X}_1|$-dimensional random variable \footnote{We will omit the subscript of $\Pi_1$ when there is no ambiguity.}.
In this section we prove that when $P(X_1 | X_2)$ is estimated perfectly, $\Pi_1$ is at least as informative as $X_1$.

\begin{lemma}
\label{lemma:mi-pi}
 If it $\Pi_1$ perfect, namely $\Pi_1 = P( \cdot | X_2)$, then the mutual information $I(X_1; \Pi_1) = I(X_1; X_2)$.
\end{lemma}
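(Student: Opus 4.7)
The plan is to prove this by combining two standard observations: first, that $\Pi_1$ is a deterministic function of $X_2$ (so one direction of the inequality comes for free from the data processing inequality), and second, that $X_1$ and $X_2$ are conditionally independent given $\Pi_1$ (which gives the other direction via the chain rule for mutual information).

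First I would note that the map $x_2 \mapsto P(X_1 = \cdot \mid X_2 = x_2)$ is a deterministic function $g$ of the value of $X_2$, so $\Pi_1 = g(X_2)$. The data processing inequality applied to the Markov chain $X_1 \to X_2 \to \Pi_1$ immediately yields $I(X_1;\Pi_1) \leq I(X_1;X_2)$.

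Next I would establish the reverse inequality by showing $X_1 \perp X_2 \mid \Pi_1$. The point is that for any $x_2$ with $g(x_2) = \pi$, the defining property of $\Pi_1$ gives $P(X_1 = x_1 \mid X_2 = x_2) = \pi(x_1)$. Since $\Pi_1$ is $\sigma(X_2)$-measurable, conditioning additionally on $\Pi_1 = \pi$ does not change this: $P(X_1 = x_1 \mid X_2 = x_2, \Pi_1 = \pi) = \pi(x_1)$. Marginalizing over $X_2$ conditional on $\Pi_1 = \pi$ gives $P(X_1 = x_1 \mid \Pi_1 = \pi) = \pi(x_1)$ as well, so the conditional distribution of $X_1$ does not depend on $X_2$ once $\Pi_1$ is known. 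Hence $I(X_1;X_2 \mid \Pi_1) = 0$.

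Finally I would assemble the pieces with the chain rule. On one hand $I(X_1; X_2, \Pi_1) = I(X_1;\Pi_1) + I(X_1; X_2 \mid \Pi_1) = I(X_1;\Pi_1)$ by conditional independence, and on the other hand $I(X_1; X_2, \Pi_1) = I(X_1; X_2) + I(X_1; \Pi_1 \mid X_2) = I(X_1; X_2)$ since $\Pi_1$ is a deterministic function of $X_2$ makes the second term vanish. Equating the two expressions yields $I(X_1; \Pi_1) = I(X_1; X_2)$. I do not anticipate a real obstacle here; the only subtlety is handling $\Pi_1$ as a random variable taking values in the simplex rather than in $\mathcal{X}_1$, but since we only use that $\Pi_1$ is a measurable function of $X_2$ and that the conditional law of $X_1$ given $X_2$ equals $\Pi_1$ by construction, the argument goes through verbatim.
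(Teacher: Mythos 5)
Your proof is correct, and while it lands on the same pivotal fact as the paper, it reaches the equality by a structurally different route. The paper proves the lemma by a direct calculation of conditional entropy: it re-indexes the sum defining $H(X_1 \mid \Pi_1)$ by grouping the values of $x_2$ according to the fiber $\{x_2 : P(X_1 \mid x_2) = \pi_1\}$, observes that $P(x_1 \mid \pi_1) = P(x_1 \mid x_2)$ on each fiber, and concludes $H(X_1 \mid \Pi_1) = H(X_1 \mid X_2)$ in one chain of equalities. You instead argue via two inequalities: a data-processing bound $I(X_1;\Pi_1) \le I(X_1;X_2)$ from the Markov chain $X_1 \to X_2 \to \Pi_1$, and a conditional-independence claim $X_1 \perp X_2 \mid \Pi_1$ which you combine with the mutual-information chain rule to get the reverse. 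Both approaches hinge on the single observation that $P(X_1 \mid \Pi_1 = \pi) = \pi = P(X_1 \mid X_2 = x_2)$ whenever $g(x_2) = \pi$; your step establishing $X_1 \perp X_2 \mid \Pi_1$ is precisely the content that the paper's re-indexing encodes implicitly. Your version is arguably more transparent about why the equality holds (it isolates the conditional independence and states it as a sufficient statistic property), and it is slightly more robust since the chain-rule decomposition does not rely on $\Pi_1$ being discrete, whereas the paper's explicit summation does (the paper flags the discreteness of $X_2$, and hence of $\Pi_1$, for exactly this reason). The cost is that your argument invokes more machinery (DPI, chain rule, and a conditional-independence lemma) where the paper's one-line computation suffices. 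Also note, in passing, that the paper's display is missing the customary minus sign in the definition of conditional entropy; this is cosmetic and does not affect the argument. The small technical subtlety you flag at the end — that $\Pi_1$ takes values in the probability simplex rather than in $\mathcal{X}_1$ — is handled correctly: since you only use that $\Pi_1$ is $\sigma(X_2)$-measurable and equals the conditional law by construction, the argument indeed goes through unchanged.
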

\begin{proof}
Since $X_2$ is discrete, $\Pi_1$ is discrete too.
\begin{align*}
    H(X_1 | \Pi_1)
    =& \sum_{x_1, \pi_1} P(X_1, \pi_1) \log P(x_1 | \pi_1) \\
    =& \sum_{x_1, \pi_1} \sum_{x_2: P(X_1 | x_2) = \pi_1} P(x_1, x_2) \log P(x_1 | x_2) = H(X_1 | X_2)
\end{align*}
\end{proof}
Compared to previous works \cite{hjelm2018learning,pmlr-v80-belghazi18a,oord2018representation} that show some training objectives similar to MLM's are lower bounds of the mutual information $I(X_1; X_2)$, we directly show that the output of the MLM, $\Pi$, maximizes the mutual information, since $I(X_1; f(X_2)) \leq I(X_1; X_2)$ for any $f$.
Moreover, instead of explaining the efficacy of pretraining with the infomax principle \cite{infomax1,bell1995information}, our theories below provide a different perspective.

\begin{theorem}
\label{thm:mi-pi}
If $\Pi$ is perfect,
\begin{equation}
    I(\Pi; Y) \geq I(X_1; Y)
\end{equation}
\end{theorem}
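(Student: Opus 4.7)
My plan is to derive the Markov chain $Y \to \Pi \to X_1$ and then invoke the data processing inequality, which will immediately give $I(\Pi;Y) \ge I(X_1;Y)$. All the substantive work will be in establishing that Markov chain.

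First I would unpack what ``$\Pi$ is perfect'' means: since $\Pi = P(X_1\mid X_2)$ is determined pointwise by $X_2$, it is a deterministic function $\Pi = g(X_2)$, where $g(x_2)$ is the conditional distribution vector. Conditioning on $\{\Pi = \pi\}$ is the same as conditioning on the event $\{X_2 \in g^{-1}(\pi)\}$. On this event every $x_2$ yields the same conditional law $\pi$ for $X_1$, so marginalizing over $X_2$ gives
\begin{equation*}
P(X_1 = x_1 \mid \Pi = \pi) \;=\; \sum_{x_2 : g(x_2)=\pi} \pi(x_1)\, P(X_2 = x_2 \mid \Pi = \pi) \;=\; \pi(x_1).
\end{equation*}
This identifies $X_1 \mid \Pi$ as a sample from $\Pi$ itself, an observation I will reuse in the next step.

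Second, I would promote the conditional independence $Y \perp X_1 \mid X_2$ (which comes from Assumption~\ref{assumption:deterministic}, since $Y$ is a deterministic function of $X_2$) to $Y \perp X_1 \mid \Pi$. The calculation is a direct marginalization on the preimage $g^{-1}(\pi)$:
\begin{align*}
P(Y=y, X_1=x_1 \mid \Pi=\pi)
&= \sum_{x_2 : g(x_2)=\pi} P(Y=y \mid X_2=x_2)\, P(X_1=x_1 \mid X_2=x_2)\, P(X_2=x_2 \mid \Pi=\pi)\\
&= \pi(x_1) \sum_{x_2 : g(x_2)=\pi} P(Y=y \mid X_2=x_2)\, P(X_2=x_2 \mid \Pi=\pi)\\
&= \pi(x_1) \cdot P(Y=y \mid \Pi=\pi),
\end{align*}
where the first line uses $Y \perp X_1 \mid X_2$ and the second uses $P(X_1=x_1\mid X_2=x_2) = \pi(x_1)$ whenever $g(x_2)=\pi$. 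Combined with Step~1, this factorization is exactly the statement that $Y \perp X_1 \mid \Pi$.

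Third, the joint $(Y,\Pi,X_1)$ now satisfies the Markov chain $Y \to \Pi \to X_1$, so the data processing inequality yields $I(X_1;Y) \le I(\Pi;Y)$, which is the claim. I expect Step~2 to be the main obstacle, since it is where Assumption~\ref{assumption:deterministic} actually enters and where one must be careful that coarsening $X_2$ to $g(X_2)$ preserves the conditional independence. Everything else is either a definitional unfolding or a one-line appeal to a standard information-theoretic inequality.
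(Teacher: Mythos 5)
Your proof is correct, and the core insight — establishing the Markov chain $Y \to \Pi \to X_1$ (equivalently $I(X_1;Y\mid\Pi)=0$) and then invoking DPI once — is exactly the structure the paper uses. Where you diverge is in how that conditional independence is derived. The paper routes through Lemma~\ref{lemma:mi-pi}: it first shows $I(X_1;\Pi)=I(X_1;X_2)$, observes that $\Pi$ is a function of $X_2$ so the Markov chain $X_1-X_2-\Pi$ holds, uses the DPI equality condition to conclude $I(X_1;X_2\mid\Pi)=0$, and then applies DPI a second time with the deterministic map $f_{X_2\to Y}$ to conclude $I(X_1;Y\mid\Pi)=0$. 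You instead bypass the lemma entirely and verify $Y\perp X_1\mid\Pi$ by a direct marginalization over the preimages $g^{-1}(\pi)$, reusing the observation (equivalent to Lemma~\ref{lemma:mi-pi}'s proof idea) that $P(X_1=x_1\mid\Pi=\pi)=\pi(x_1)$. Your route is more elementary and self-contained, avoids the two stacked DPI applications and the implicit use of the DPI equality condition, and is incidentally cleaner than the paper's write-up, which contains notational slips (the paper writes $I(f_{X_1\to Y}(X_1),X_2\mid\Pi)$ and later $H(Y\mid X_2,\Pi)\le H(Y\mid X_2)$ where the argument really requires $f_{X_2\to Y}$ applied to $X_2$ and the final bound $H(Y\mid\Pi)=H(Y\mid X_1,\Pi)\le H(Y\mid X_1)$). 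The trade-off is that the paper's route makes the role of Lemma~\ref{lemma:mi-pi} explicit, which it also uses for the surrounding discussion about $\Pi$ maximizing $I(X_1;f(X_2))$; your direct computation folds that content in but doesn't surface the lemma as a reusable statement.
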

\begin{proof}
Since $\Pi$ is perfect, by Lemma~\ref{lemma:mi-pi}, we have
\begin{equation}
    I(X_1, X_2) =
    I(\Pi, X_2).
    \label{eq:optimal-estimator}
\end{equation}
By data processing inequality, the first equality in Equation~\ref{eq:optimal-estimator} implies $I(X_1, X_2 | \Pi) = 0$.
By Assumption~\ref{assumption:deterministic}, a deterministic mapping $f_{X_2 \to Y}$ from $X_2$ to $Y$ exists. Applying data processing inequality again, we have
\begin{equation}
\begin{split}
    I(X_1, X_2 | \Pi) \geq I(f_{X_1 \to Y}(X_1), X_2 | \Pi)
    = I(Y, X_2 | \Pi) \geq 0,
\end{split}
\label{eq:mi-pi-data-processing}
\end{equation}
which implies $I(Y, X_2 | \Pi) = 0$. Accordingly,
\begin{align}
    H(Y | \Pi) = H(Y | X_2, \Pi) \leq & H(Y | X_2)
\end{align}
\end{proof}

Theorem~\ref{thm:mi-pi} shows that $\Pi$ is a more informative feature than $X_1$.
However, a model does not necessarily rely more on a more informative feature.
We will discuss more in the next section.

\begin{table*}[]
    \centering
    \begin{tabular}{cc|c|cc|cc}
    \toprule
    & & 1 layer & \multicolumn{2}{c|}{2 layers} & \multicolumn{2}{c}{3 layers} \\
    $d1$ & $\nu$ & w/o & w/o pre & w/ pre & w/o pre & w/ pre \\
    \midrule
     50 & 0.04 &  3680 (189.5) &  691  (55.8) & 614 (169.1) & 302 (47.2) & 249 (53.7) \\
     50 & 0.10 &  2664 (121.2) &  530  (30.6) & 441 (134.9) & 242 (27.6) & 180 (37.5) \\
     50 & 0.25 &  1420  (96.0) &  352  (23.8) & 300  (62.0) & 179 (13.8) & 148 (28.7) \\
     50 & 0.50 &   306  (79.8) &  141  (40.7) & 118  (33.4) & 106 (23.1) &  89 (24.0) \\
    100 & 0.04 &  5466 (170.1) &  945  (57.2) & 689 (225.3) & 431 (51.1) & 275 (72.1) \\
    100 & 0.10 &  3789  (99.2) &  677  (32.2) & 478 (142.9) & 317 (30.3) & 208 (44.3) \\
    100 & 0.25 &  1952  (64.9) &  428  (13.1) & 330  (85.0) & 214 (16.2) & 169 (32.5) \\
    100 & 0.50 &   330  (78.0) &  156  (34.0) & 133  (41.2) & 128 (28.2) & 112 (36.1) \\
    500 & 0.04 & 11127 (265.9) & 1953 (112.5) & 857 (442.6) & 792 (69.8) & 431 (88.4) \\
    500 & 0.10 &  7912 (169.2) & 1279  (67.5) & 657 (234.9) & 550 (46.7) & 402 (97.0) \\
    500 & 0.25 &  4321 (152.3) &  772  (35.5) & 501 (133.5) & 399 (42.3) & 391 (66.0) \\
    500 & 0.50 &   576 (150.0) &  392  (70.2) & 407  (81.1) & 367 (69.1) & 386 (80.0) \\
    \bottomrule
    \end{tabular}
    \caption{The number of iterations required to converge. The number is the average of 25 runs with different random seeds, and the number in parentheses is the standard deviation.}
    \label{tab:toy-exp-ptr}
\end{table*}

\subsection{Learning from $\Pi$ is Easy}
\label{sec:easy}

It is important that learning from $\Pi$ is easy.
Because of simplicity bias, a neural network model is likely to rely on the easy-to-learn features due to the simplicity bias \cite{shah2020pitfalls,kalimeris2019sgd}.
We conjecture that a model excessively relies on the spurious feature $X_1$ when learning from $X_1$ is easier than learning from the robust feature $X_2$.
If learning from $\Pi$ is easy, then the model will rely on $\Pi$ more and thus will rely on $X_1$ less.
However, features with higher mutual information to $Y$ are not necessarily easy to learn.
For instance, models tend to rely on $X_1$ instead of $X_1$ at the beginning of the training process.
MLM can possibly mitigate the issue of spurious features if learning from $\Pi$ is easy.

We show that learning from $\Pi$ is at least as easy as learning from $X_1$ by proving the following theorem:
\begin{theorem}
Let $\tilde{h}^{(D_n)}_{X_1}$ be the classifier trained with MLE loss using $n$ data pairs $(x^{(1)}_1, y^{(1)}), (x^{(2)}_1, y^{(2)}), \cdots (x^{(n)}_1, y^{(n)})$, and the converged classifier be $\tilde{h}^*_{X_1}$.
Given $P(Y)$, there exists a learning algorithm.
It generates $\tilde{h}^{(D_n)}_{\Pi}$ using $(\pi^{(1)}, y^{(1)})$, $(\pi^{(2)}, y^{(2)})$, $\cdots$, $(\pi^{(n)}, y^{(n)})$ and satisfies the following three properties:
(1) $\tilde{h}^{(D_n)}_{\Pi}$ converges asymptotically no slower than $\tilde{h}^{(D_n)}_{X_1}$ does.
(2) Over the distribution of $(X, Y)$, the loss expectation of the converged classifier $\tilde{h}^*_{\Pi}$ is no more the expectation of $\tilde{h}^{*}_{X_1}$.
(3) $\tilde{h}^*_{\Pi}$ is a linear model.
\label{thm:easy}
\end{theorem}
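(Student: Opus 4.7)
The plan is to construct an explicit learning algorithm that yields a linear classifier in $\pi$, and then check the three properties in the order (3) $\to$ (2) $\to$ (1). For every value $x_1 \in \mathcal{X}_1$, maintain a parameter $\hat\theta_{x_1} \in \Delta^{|\mathcal{Y}|}$ estimating $P(Y \mid X_1 = x_1)$ through the $\pi$-weighted relative frequency
\begin{equation*}
    \hat\theta_{x_1}(y) \;=\; \frac{\sum_i \pi^{(i)}(x_1)\,\mathbbm{1}[y^{(i)} = y]}{\sum_i \pi^{(i)}(x_1)},
\end{equation*}
falling back on the supplied prior $P(Y)$ whenever the denominator vanishes. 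The output classifier is $\tilde h^{(D_n)}_{\Pi}(\pi) := \sum_{x_1} \pi(x_1)\,\hat\theta_{x_1}$, which is manifestly linear in $\pi$; this gives property (3) immediately.

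For property (2), since $\pi^{(i)}(x_1) = P(X_1 = x_1 \mid X_2^{(i)})$, the law of large numbers gives $\hat\theta_{x_1} \to P(Y \mid X_1 = x_1)$, so the converged classifier satisfies $\tilde h^*_{\Pi}(\pi) = \sum_{x_1} \pi(x_1)\,P(Y \mid X_1 = x_1)$. Applying Jensen's inequality to the convex map $-\log$ inside the cross-entropy loss yields
\begin{equation*}
    \mathbb{E}\!\left[-\log \tilde h^*_\Pi(\Pi)(Y)\right] \;\leq\; \mathbb{E}\!\left[\sum_{x_1} \Pi(x_1)\bigl(-\log P(Y\mid X_1 = x_1)\bigr)\right].
\end{equation*}
The right-hand side unfolds, using Assumption~\ref{assumption:deterministic} (which forces $Y \perp X_1 \mid X_2$), to $\mathbb{E}_{X_1, Y}[-\log P(Y\mid X_1)] = H(Y \mid X_1)$, which is precisely the expected loss of the converged $\tilde h^*_{X_1}$.

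For property (1), I would compare the asymptotic variance of $\hat\theta^{\pi}_{x_1}$ with that of the $X_1$-based MLE $\hat\theta^{X_1}_{x_1}(y) = \bigl(\sum_i \mathbbm{1}[x_1^{(i)}=x_1]\,\mathbbm{1}[y^{(i)}=y]\bigr) / \bigl(\sum_i \mathbbm{1}[x_1^{(i)}=x_1]\bigr)$. Writing $\theta := P(Y{=}y \mid X_1{=}x_1)$ and applying the delta method reduces each asymptotic variance to $\Var(N_i - \theta D_i)/P(X_1{=}x_1)^2$, where $(N_i, D_i)$ are the per-sample summands in the numerator and denominator. Because $Y^{(i)}$ is a deterministic function of $X_2^{(i)}$, one checks $N^{\pi}_i - \theta D^{\pi}_i = \mathbb{E}[N^{X_1}_i - \theta D^{X_1}_i \mid X_2^{(i)}, Y^{(i)}]$, so the law of total variance gives $\Var(N^{\pi}_i - \theta D^{\pi}_i) \leq \Var(N^{X_1}_i - \theta D^{X_1}_i)$. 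A final application of the delta method transports this into an asymptotic rate bound on $\tilde h^{(D_n)}_\Pi$ that is no slower than the one on $\tilde h^{(D_n)}_{X_1}$.

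The main obstacle is property (1): the numerator and denominator of $\hat\theta$ are both random and correlated, so a naive Rao-Blackwell argument applied to each separately would not suffice. The trick I expect to need is to Rao-Blackwellize the single scalar $N_i - \theta D_i$ that the delta method's linearization produces, rather than $N_i$ and $D_i$ individually; the deterministic-$Y$-given-$X_2$ hypothesis is exactly what makes the relevant conditional expectation collapse cleanly. Properties (2) and (3) are essentially bookkeeping once the weighted-frequency algorithm is fixed.
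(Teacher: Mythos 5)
Your proposal is correct, and it diverges from the paper's proof in two substantive ways. First, your finite-sample construction is different: the paper estimates $P(X_1 \mid Y = y)$ by averaging $\pi^{(i)}$ over the samples with $y^{(i)} = y$ and then applies Bayes' rule with the supplied $P(Y)$, whereas you estimate $P(Y = y \mid X_1 = x_1)$ directly with the $\pi$-weighted ratio $\hat\theta_{x_1}(y) = \bigl(\sum_i \pi^{(i)}(x_1)\mathbbm{1}[y^{(i)}=y]\bigr)/\bigl(\sum_i \pi^{(i)}(x_1)\bigr)$. Both converge to the same limiting linear classifier $\pi \mapsto \sum_{x_1} \pi(x_1) P(Y \mid X_1 = x_1)$, so properties (2) and (3) are unaffected by this choice, and your property-(2) argument via Jensen applied to $-\log$ is the same inequality the paper states as convexity of KL in its second argument, just written in cross-entropy form. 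Second, and more significantly, your treatment of property (1) is genuinely sharper than the paper's: the paper only verifies that both the $X_1$-based and $\Pi$-based estimators converge at the rate $O(1/n)$ in $\ell_2$ error (leaving the constants uncompared), whereas you Rao-Blackwellize the delta-method linearization $N_i - \theta D_i$ by conditioning on $(X_2^{(i)}, Y^{(i)})$ — using Assumption~\ref{assumption:deterministic} to ensure this conditional expectation reduces cleanly to a function of $(\pi^{(i)}, Y^{(i)})$ — and then invoke the law of total variance to show the $\Pi$-based asymptotic variance is dominated termwise. This is a cleaner and strictly more informative way to establish "no slower." The one thing worth tightening in a full write-up is confirming that the conditional expectation $\mathbb{E}[N_i^{X_1} - \theta D_i^{X_1} \mid X_2^{(i)}, Y^{(i)}] = (\mathbbm{1}[Y^{(i)}=y]-\theta)\,\pi^{(i)}(x_1)$ is indeed measurable with respect to the $\Pi$-observer's data $(\pi^{(i)}, Y^{(i)})$ — it is, but you should say so explicitly, since otherwise the Rao-Blackwell improvement would not correspond to a realizable estimator.
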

\begin{proof}
Proof sketch: The classifier that maximizes the likelihood of $(x^{(1)}_1, y^{(1)}), (x^{(2)}_1, y^{(2)}), \cdots (x^{(n)}_1, y^{(n)})$ can be attained by counting the co-occurrence of $X_1$ and $Y$. It converges to
\begin{equation}
    \tilde{h}^{*}_{X_1}(y | X_1 = x) = P(y | X_1 = x).
\end{equation}

Based on $\Pi_1$, a classifier can be attained by first estimating $P(Y)$ and $P(x_1 | y)$ for all $x_1$ and $y$:
\begin{equation}
    \rho_y =  \frac{\abs{\left\{ i | y^{(i)} = y \right\}}}{n} \quad
    \rho_{X_1|y}^{(n)} = \frac{\sum_{\{ i | y^{(i)} = y \}} \pi^{(i)}}{\abs{\left\{ i | y^{(i)} = y \right\}}},
\end{equation}
and then we can construct a classifier that converges to
\begin{equation}
    \tilde{h}^*_{\Pi}(y | \pi) = \sum_{x_1} P(y | x_1) \pi.
\end{equation}
For both $\tilde{h}^{*}_{X_2}$ and $\tilde{h}^*_{\Pi}(y | \pi)$ the $l_2$ error converges to $0$ at a rate of $O(\frac{1}{n})$.

Then we show that $\tilde{h}^*_{\Pi}(y | \pi)$ is at least as good as $\tilde{h}^*_{X_2}(y | \pi)$ by showing $\kld{P(Y|X)}{\tilde{h}^{*}_{X_1}(Y | X_1, X_2, \Pi)} \geq \kld{P(Y|X)}{\tilde{h}^{*}_{\Pi}(Y | X)}$ with convexity:
\begin{align}
    \sum_{x_1} P(x_1 | x_2) \kld{P(Y|x_2)}{P(Y|x_1)} \geq \kld{P(Y|x_2)}{\sum_{x_1} P(Y | x_1) P(x_1 | x_2)}.
\end{align}

\end{proof}

The remaining question is whether a deep learning model used in common practices can perform at least as well as the algorithm in Theorem~\ref{thm:easy}.
Indeed, without any knowledge of deep learning models, it is impossible to theoretically prove that a model will necessarily rely on $\Pi$ instead of $X_1$.
Therefore, in Section~\ref{sec:toy-example-ptr} and Section~\ref{sec:experiments} we will empirically validate that our theorems are applicable in the real world scenarios.

\subsection{Extending with Causal Models}
\label{sec:causal}

Before we validate our theories with empiricially, we make a step further by relaxing Assumption~\ref{assumption:deterministic}.
We can do so by treating $X_2$ as a confounder, and then we can see how MLM pre-training is helpful in the causal and anticausal settings.

\begin{theorem}
Even if Assumption~\ref{assumption:deterministic} is not true, Theorem~\ref{thm:mi-pi} still holds if $X_1, X_2, Y$ follow the causal setting in Figure~\ref{fig:causal-graph}.
\end{theorem}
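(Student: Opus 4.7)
The plan is to mirror the proof of Theorem~\ref{thm:mi-pi}, substituting for the deterministic-mapping step (which used Assumption~\ref{assumption:deterministic}) a conditional-independence statement read off from the causal graph in Figure~\ref{fig:causal-graph}. The cleanest route is to establish $Y \perp X_1 \,|\, \Pi$; once this holds, the fact that conditioning reduces entropy gives $H(Y|\Pi) = H(Y|X_1, \Pi) \leq H(Y|X_1)$, which is exactly $I(\Pi; Y) \geq I(X_1; Y)$. As a first ingredient I would redo the step showing $X_1 \perp X_2 \,|\, \Pi$ whenever $\Pi$ is perfect; this part does not invoke Assumption~\ref{assumption:deterministic}. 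Since $\Pi$ is a deterministic function of $X_2$, $H(X_1 | X_2, \Pi) = H(X_1 | X_2)$; by Lemma~\ref{lemma:mi-pi}, $H(X_1 | \Pi) = H(X_1 | X_2)$; subtracting gives $I(X_1; X_2 | \Pi) = 0$.

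From Figure~\ref{fig:causal-graph} I would then extract the $d$-separation $Y \perp X_1 \,|\, X_2$: intuitively, $X_2$ carries all the information that $X_1$ shares with $Y$, so once $X_2$ is observed, $X_1$ contributes nothing further about $Y$. This is the substitute for Assumption~\ref{assumption:deterministic} and is the step that genuinely uses the causal structure, rather than demanding that $X_2$ deterministically fix $Y$.

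I would then combine the two independences. Since $\Pi$ is a function of $X_2$, marginalising over $X_2$ yields
\begin{align*}
P(Y | X_1, \Pi)
&= \sum_{x_2} P(Y | x_2, X_1, \Pi)\, P(x_2 | X_1, \Pi) \\
&= \sum_{x_2} P(Y | x_2)\, P(x_2 | \Pi) \;=\; P(Y | \Pi),
\end{align*}
where the reduction $P(Y | x_2, X_1, \Pi) = P(Y | x_2)$ uses $Y \perp X_1 \,|\, X_2$ together with $\Pi$ being a function of $X_2$, and $P(x_2 | X_1, \Pi) = P(x_2 | \Pi)$ uses $X_1 \perp X_2 \,|\, \Pi$. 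This gives $Y \perp X_1 \,|\, \Pi$, and the entropy inequality above then closes the argument.

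The main obstacle is the $d$-separation step: the whole argument hinges on whether the causal graph actually implies $Y \perp X_1 \,|\, X_2$. If Figure~\ref{fig:causal-graph} contains a directed edge $X_1 \to Y$, or any unobserved confounder between $X_1$ and $Y$ that is not blocked by $X_2$, the $d$-separation fails and the conclusion does not follow in its present form. I would therefore first read off the exact graph and verify that the only active paths between $X_1$ and $Y$ pass through $X_2$; granted that, the remainder of the proof is routine manipulation of conditional distributions.
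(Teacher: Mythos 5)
The caveat you flagged is exactly where the argument breaks. The graph in Figure~\ref{fig:causal-graph} has edges $Z \to X_1$, $Z \to X_2$, and $X_1 \to Y$; the direct arrow $X_1 \to Y$ means the $d$-separation $Y \perp X_1 \mid X_2$ you need is simply false (the path $X_1 \to Y$ is never blocked by $X_2$), so the reduction $P(Y \mid x_2, X_1, \Pi) = P(Y \mid x_2)$ does not hold and the chain collapses at the second step.

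The paper instead reads the \emph{opposite} conditional independence off the graph. Since $X_1$ is the only parent of $Y$, $d$-separation gives $Y \perp X_2 \mid X_1$; because $\Pi$ is a deterministic function of $X_2$, one also has $Y \perp X_2 \mid X_1, \Pi$, so $X_2 \to X_1 \to Y$ is a Markov chain conditional on $\Pi$. Reusing $I(X_1; X_2 \mid \Pi) = 0$ (the consequence of Lemma~\ref{lemma:mi-pi} you correctly re-derived), the data processing inequality gives $I(Y; X_2 \mid \Pi) \leq I(X_1; X_2 \mid \Pi) = 0$, hence $H(Y \mid \Pi) = H(Y \mid X_2, \Pi) = H(Y \mid X_2)$, i.e., $I(\Pi; Y) = I(X_2; Y)$. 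This is the substitute the paper uses for the deterministic-map step, and it is the reason the paper says inequality~\ref{eq:mi-pi-data-processing} ``still holds.'' Note, though, that what this actually establishes is that $\Pi$ is a sufficient statistic of $X_2$ for $Y$; it gives $I(\Pi; Y) = I(X_2; Y)$ rather than the literal $I(\Pi; Y) \geq I(X_1; Y)$. In this graph $d$-separation also yields $I(X_2; Y) \leq I(X_1; Y)$, so the inequality in Theorem~\ref{thm:mi-pi} does not follow from the causal structure alone without a further assumption that $X_2$ is at least as informative as $X_1$ about $Y$ --- a point worth keeping in mind when reading the paper's statement.
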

\begin{proof}
By the structure of $X_1, X_2, Y$, inequality~\ref{eq:mi-pi-data-processing} holds even if the deterministic mapping $f_{X_2 \to Y}$ does not exist.
\end{proof}

\begin{theorem}
Assume that the set of vectors $\{P(X_1 | Y = y) | y \in \mathcal{Y} \}$ is linear independent, and if $X_1, X_2, Y$ follow the anticausal setting in Figure~\ref{fig:anticausal-graph}, then $I(\Pi, Y) = I(X_2, Y)$.
\label{thm:recover-anticausal}
\end{theorem}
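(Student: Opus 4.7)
The plan is to exploit the anticausal structure $Y \to X_1$, $Y \to X_2$ (in particular the conditional independence $X_1 \perp X_2 \mid Y$ that this graph induces) to express $\Pi = P(X_1 \mid X_2)$ as a linear combination of fixed vectors whose coefficients are exactly $P(Y \mid X_2)$. Concretely, by conditional independence,
\[
\Pi(x_1) \;=\; P(x_1 \mid X_2) \;=\; \sum_{y} P(x_1 \mid y)\, P(y \mid X_2),
\]
so $\Pi$, viewed as a $|\mathcal{X}_1|$-dimensional vector, equals $M\, P(Y \mid X_2)$, where $M$ is the matrix whose columns are the vectors $P(X_1 \mid Y = y)$ for $y \in \mathcal{Y}$.

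Next, I use the linear-independence hypothesis on $\{P(X_1 \mid Y = y)\}$ to invert this relation: since the columns of $M$ are independent, the map $P(Y \mid X_2) \mapsto M\, P(Y \mid X_2) = \Pi$ is injective. Hence $P(Y \mid X_2)$ is a deterministic function of $\Pi$. In particular, whenever two realizations $x_2, x_2'$ of $X_2$ yield the same $\Pi$, they must satisfy $P(Y \mid X_2 = x_2) = P(Y \mid X_2 = x_2')$, and therefore $P(Y \mid \Pi = \pi) = P(Y \mid X_2 = x_2)$ for any $x_2$ with $\Pi(x_2) = \pi$.

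From this I conclude by a short entropy computation. Since $\Pi$ is a deterministic function of $X_2$, the data-processing inequality already gives $I(\Pi; Y) \leq I(X_2; Y)$. For the reverse direction, the previous step yields $H(Y \mid \Pi) = H(Y \mid X_2)$ (integrate the pointwise equality of conditionals against $P(x_2)$, grouped by fibers of $\Pi$), and subtracting both sides from $H(Y)$ gives $I(\Pi; Y) = I(X_2; Y)$, as claimed.

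The main obstacle I anticipate is formalizing exactly which conditional independence is available from Figure~\ref{fig:anticausal-graph}: I need $P(X_1 \mid Y, X_2) = P(X_1 \mid Y)$ to make the decomposition of $\Pi$ valid, and the linear-independence hypothesis is doing the essential work precisely to rule out the degenerate case where distinct $P(Y \mid X_2)$ collapse to the same mixture $\Pi$. Once those two ingredients are in place, the proof is just linear algebra plus one application of the data-processing inequality.
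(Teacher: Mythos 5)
Your proof is correct and takes essentially the same route as the paper's: both expand $\Pi = \sum_y P(X_1\mid y)\,P(y\mid X_2)$ using the conditional independence $X_1 \perp X_2 \mid Y$ that the anticausal graph induces (in Figure~\ref{fig:anticausal-graph} the actual structure is $Z\to Y\to X_1$ and $Z\to Q\to X_2$, not $Y\to X_2$ directly, but $Y$ still d-separates $X_1$ from $X_2$, which is all you need), and both use linear independence of $\{P(X_1\mid Y=y)\}$ to invert the mixture and recover $P(Y\mid X_2)$ from $\Pi$. You go slightly further than the paper's sketch by explicitly closing the loop from ``recoverable'' to the mutual-information equality $I(\Pi;Y)=I(X_2;Y)$ via the two-sided data-processing argument, but there is no new idea involved.
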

\begin{proof}
The assumption is a special case of the one in \cite{lee2020predicting}, so similar techniques can be used: According to the structure of $X_1, X_2, Y$, we have
\begin{align}
     P(X_1 | X_2) = \sum_{y} P(X_1 | y) P(y | X_2) .
\end{align}
Therefore, if $\{P(X_1 | Y = y) | y \in \mathcal{Y} \}$ is linearly independent, $P(y | X_2)$ can be recovered from $\Pi = P(X_1 | X_2)$.
\end{proof}

Note that this theorem is very similar to Theorem~3.3 in \cite{wei2021pretrained}.
However, the assumptions required in ours are weaker and more realistic, and the implication is very similar.
The assumption in \cite{wei2021pretrained} that $X$ is generated from a HMM process with hidden variables $H_0, H_1, \cdots$ is a assumption stronger than our assumption that $X_1, X_2$ follows the anticausal setting.
The assumption in \cite{wei2021pretrained} that the vectors in $\{P(X_0 | H_0 = h) | h \in \mathcal{H} \}$ need to be linearly independent is also less realistic than our independence assumption that requires only the independence in $\{P(X_1 | Y = y) | y \in \mathcal{Y} \}$, because the number of hidden states $\lvert \mathcal{H} \rvert$ must be very large if $X$ is generated from the HMM model.
However, $\lvert \mathcal{Y} \rvert$ tends to be much smaller.
For binary classification cases, our assumption holds as long as $P(X_1)$ is not independent of $P(Y)$.
If we further assume that $I(X_2; Y) = I(X; Y)$, then we reach a similar conclusion that $P(Y | X)$ can be recovered from $\Pi = P(X_1 | X_2)$ by applying a linear function.

\begin{figure*}
\includegraphics[width=0.24\linewidth]{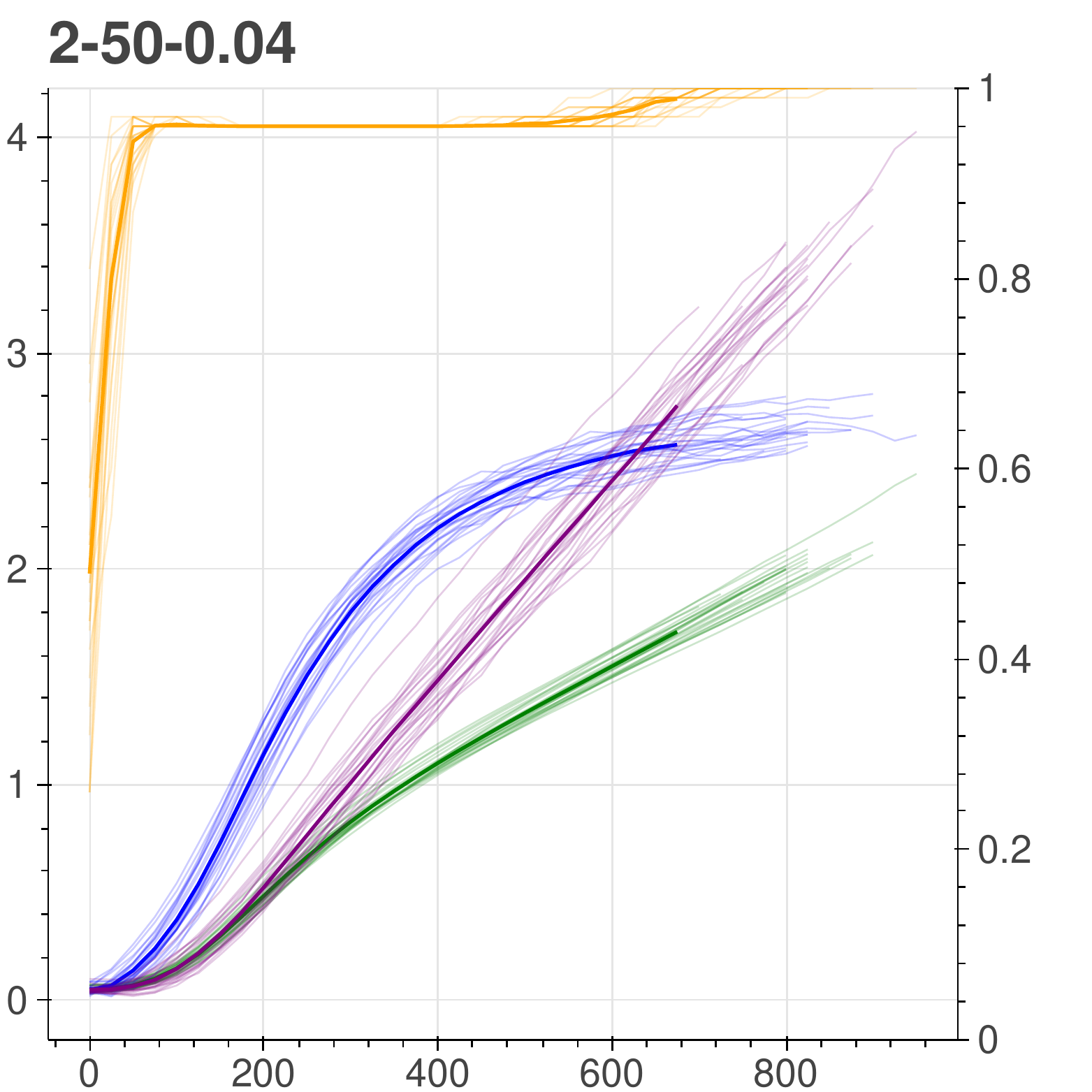}
\includegraphics[width=0.24\linewidth]{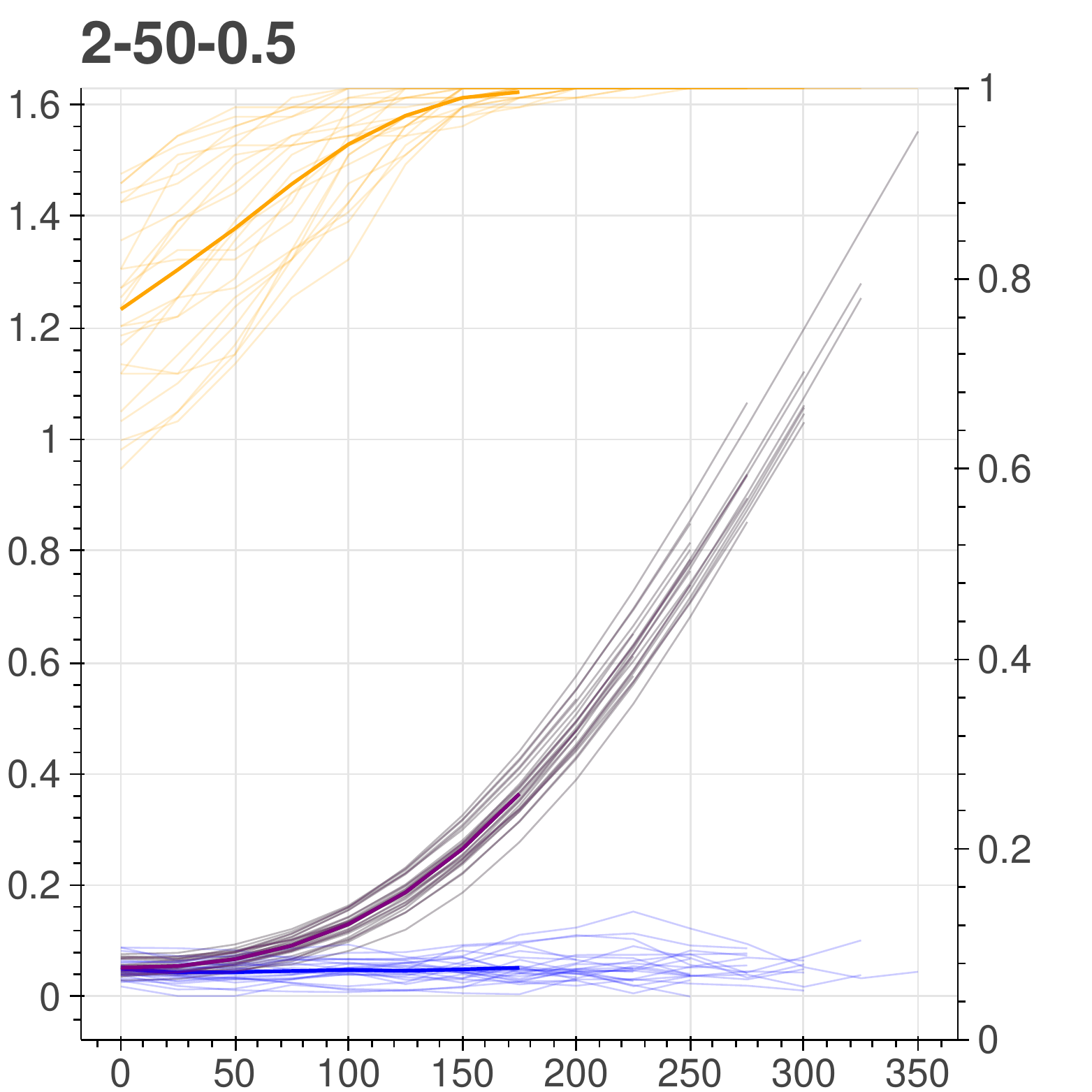}
\includegraphics[width=0.24\linewidth]{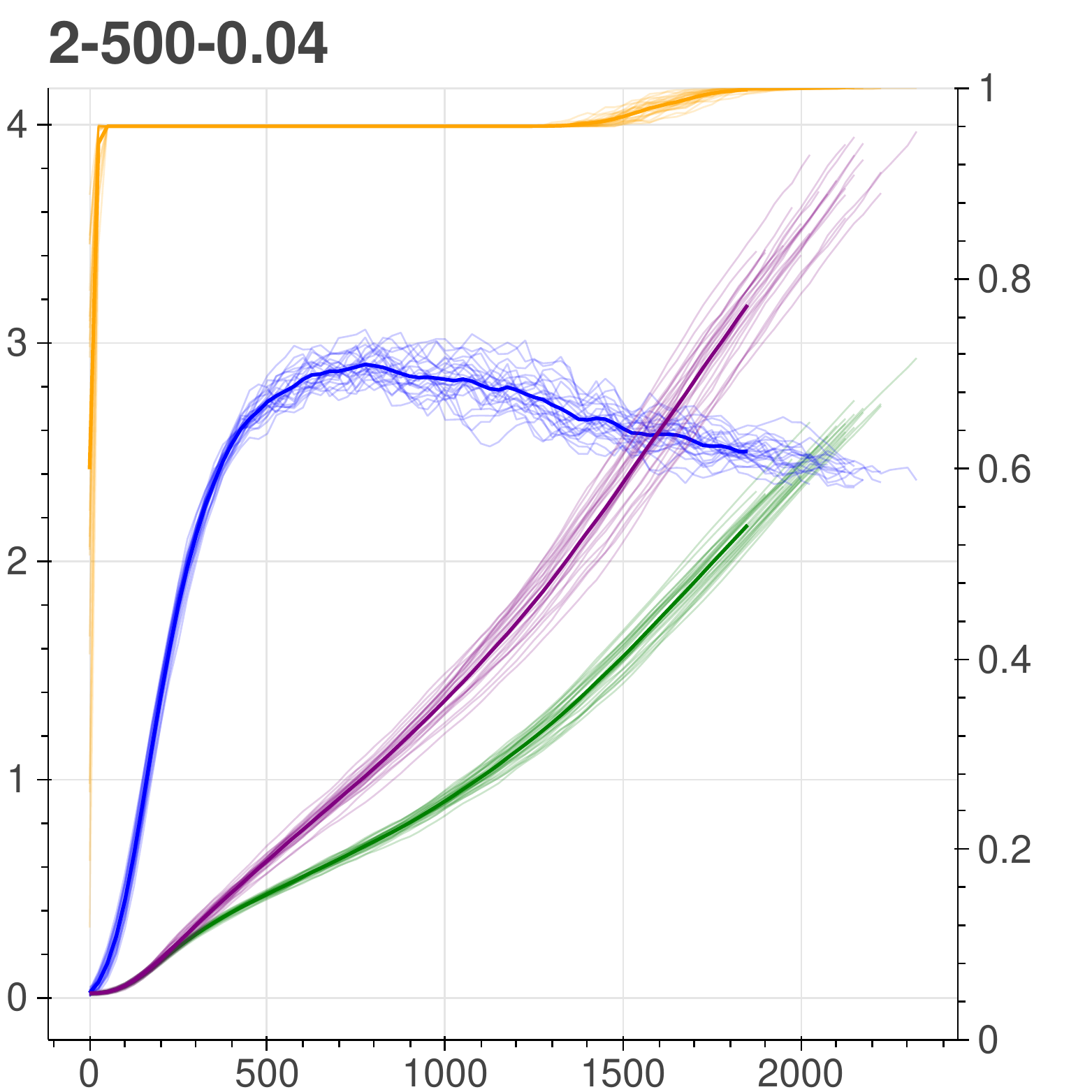}
\includegraphics[width=0.24\linewidth]{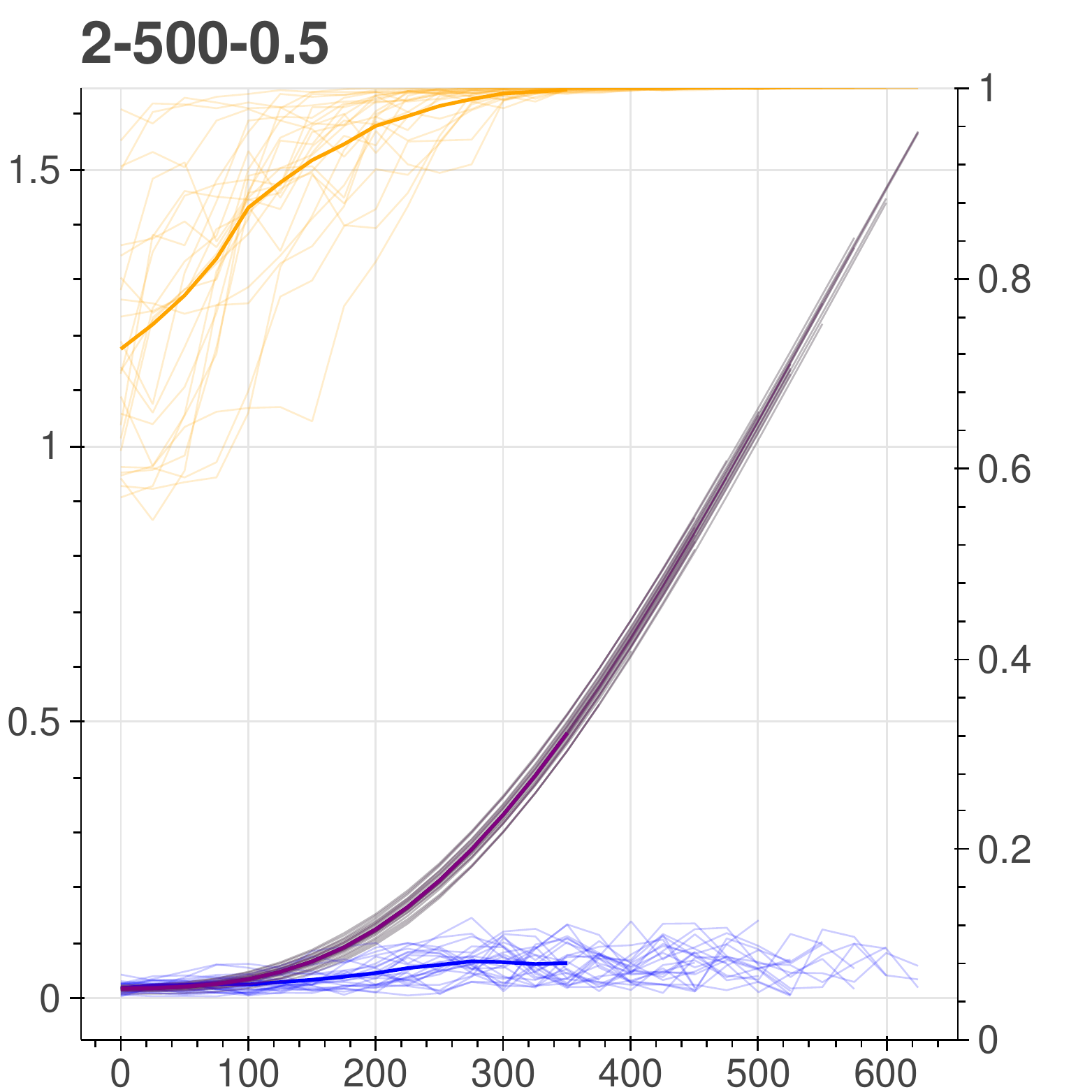}

\includegraphics[width=0.24\linewidth]{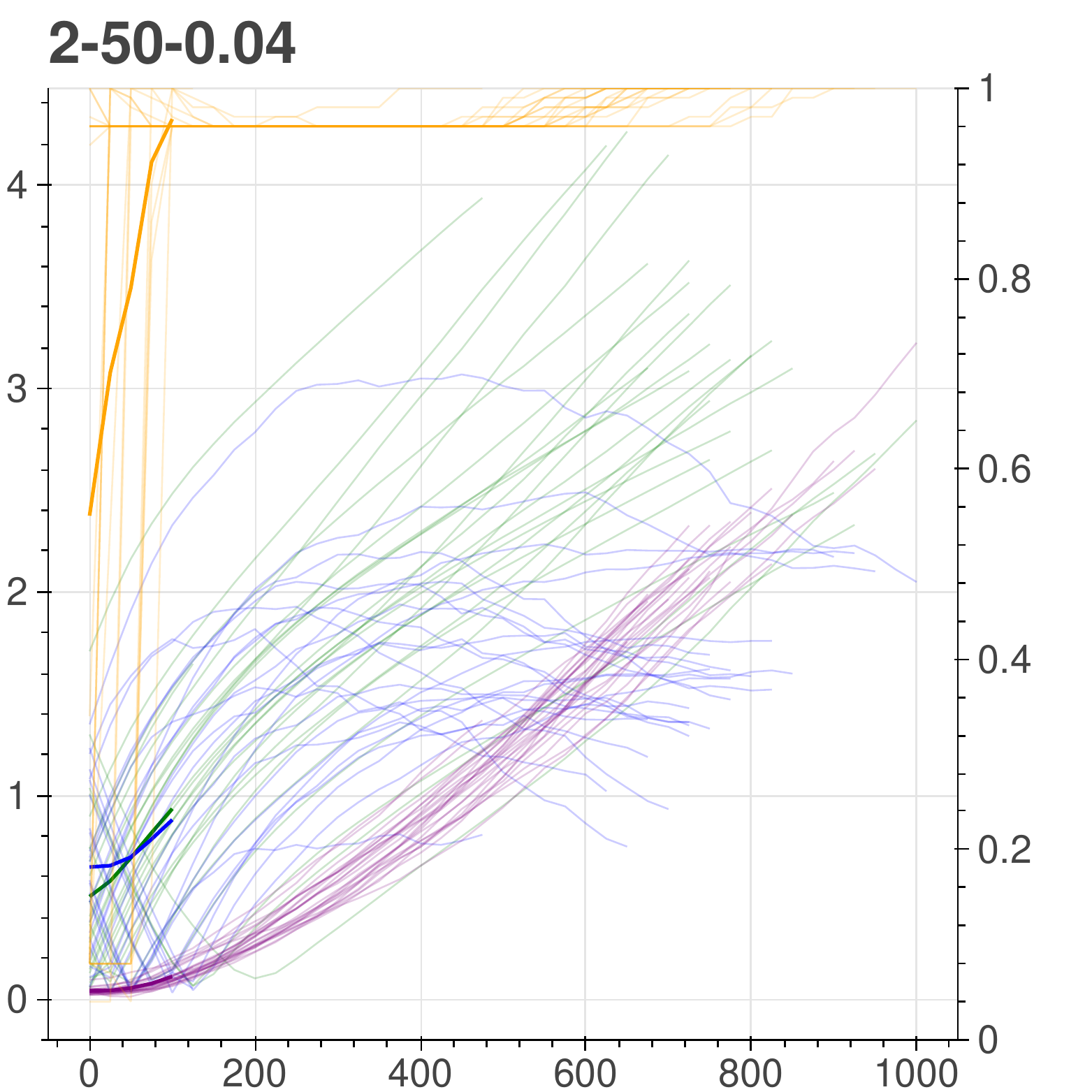}
\includegraphics[width=0.24\linewidth]{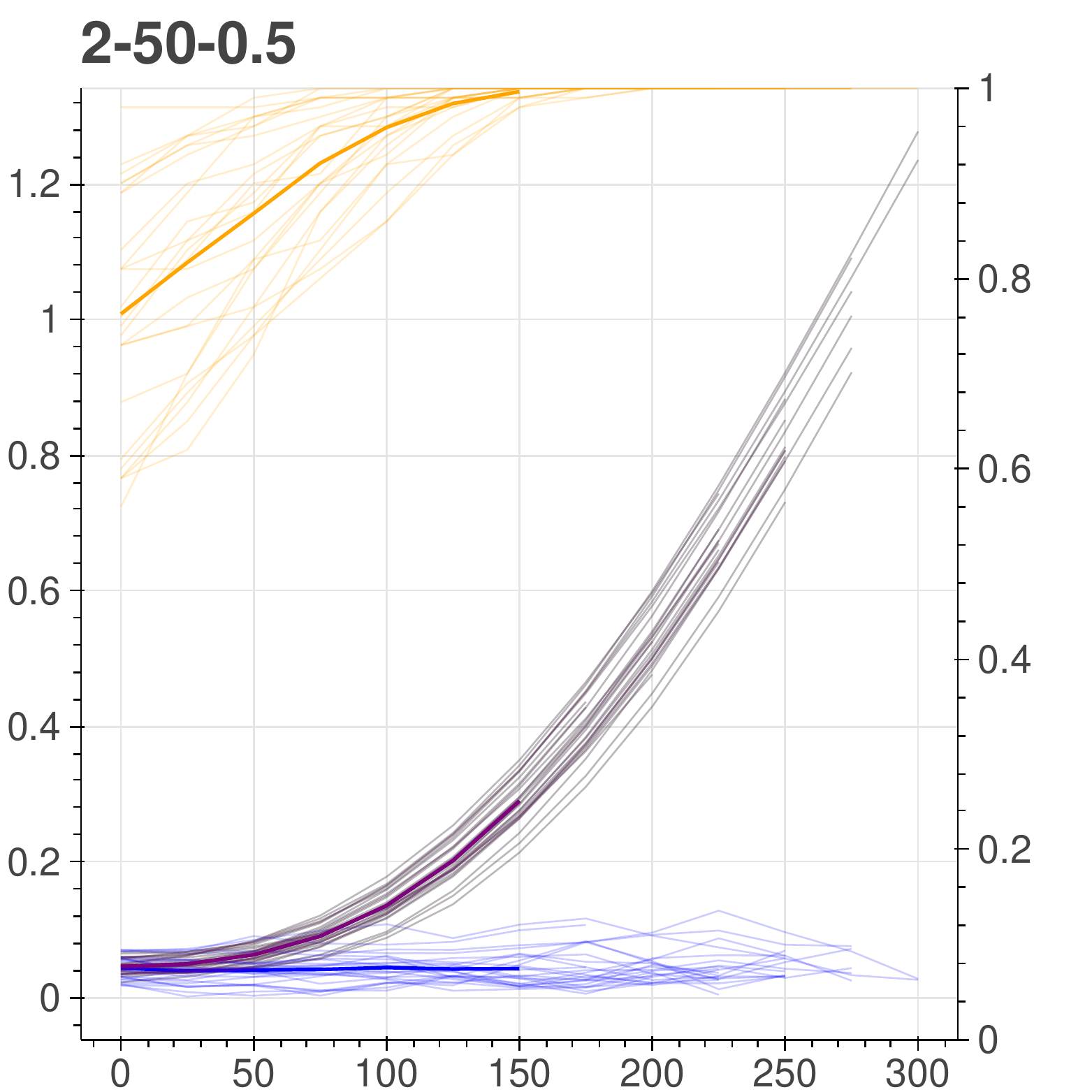}
\includegraphics[width=0.24\linewidth]{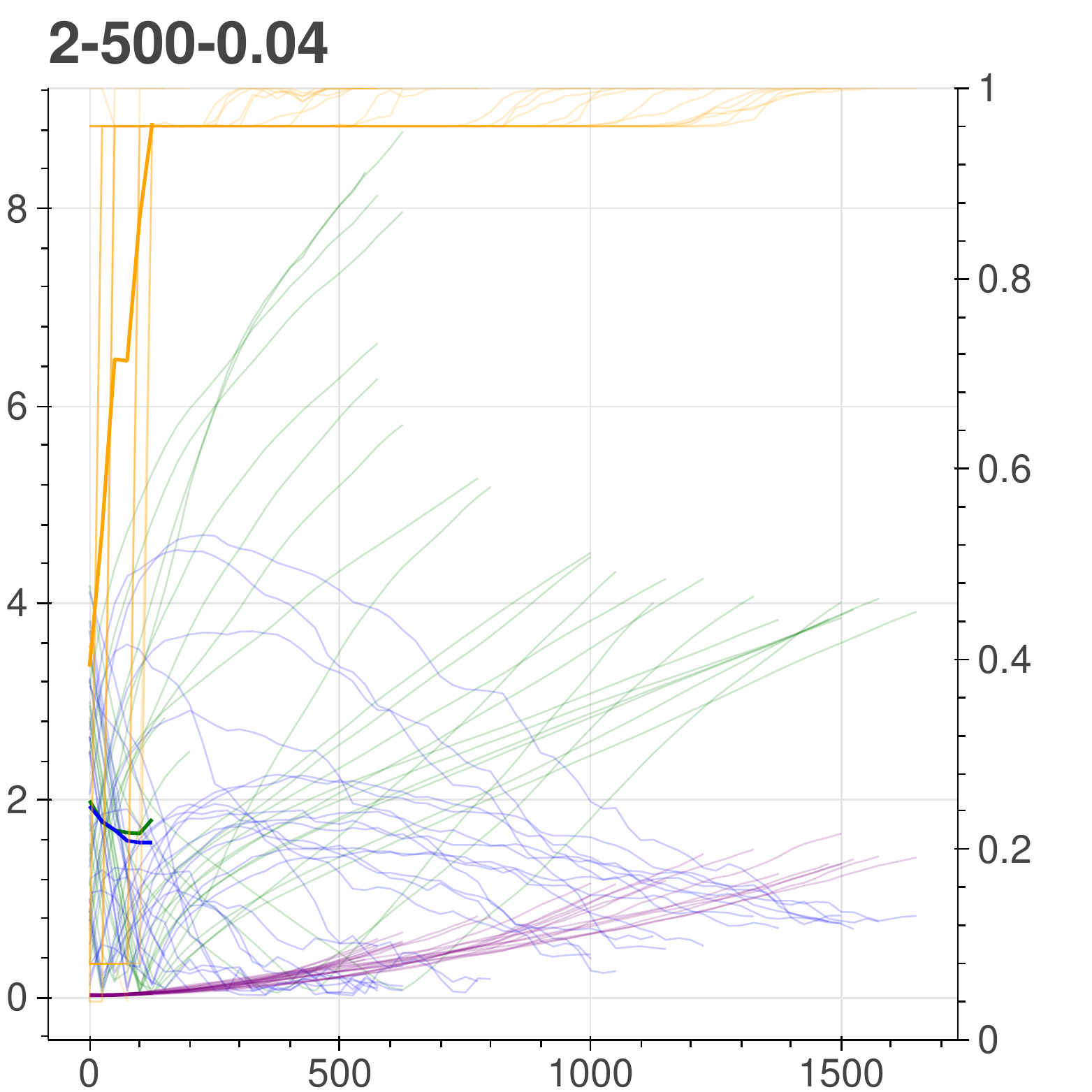}
\includegraphics[width=0.24\linewidth]{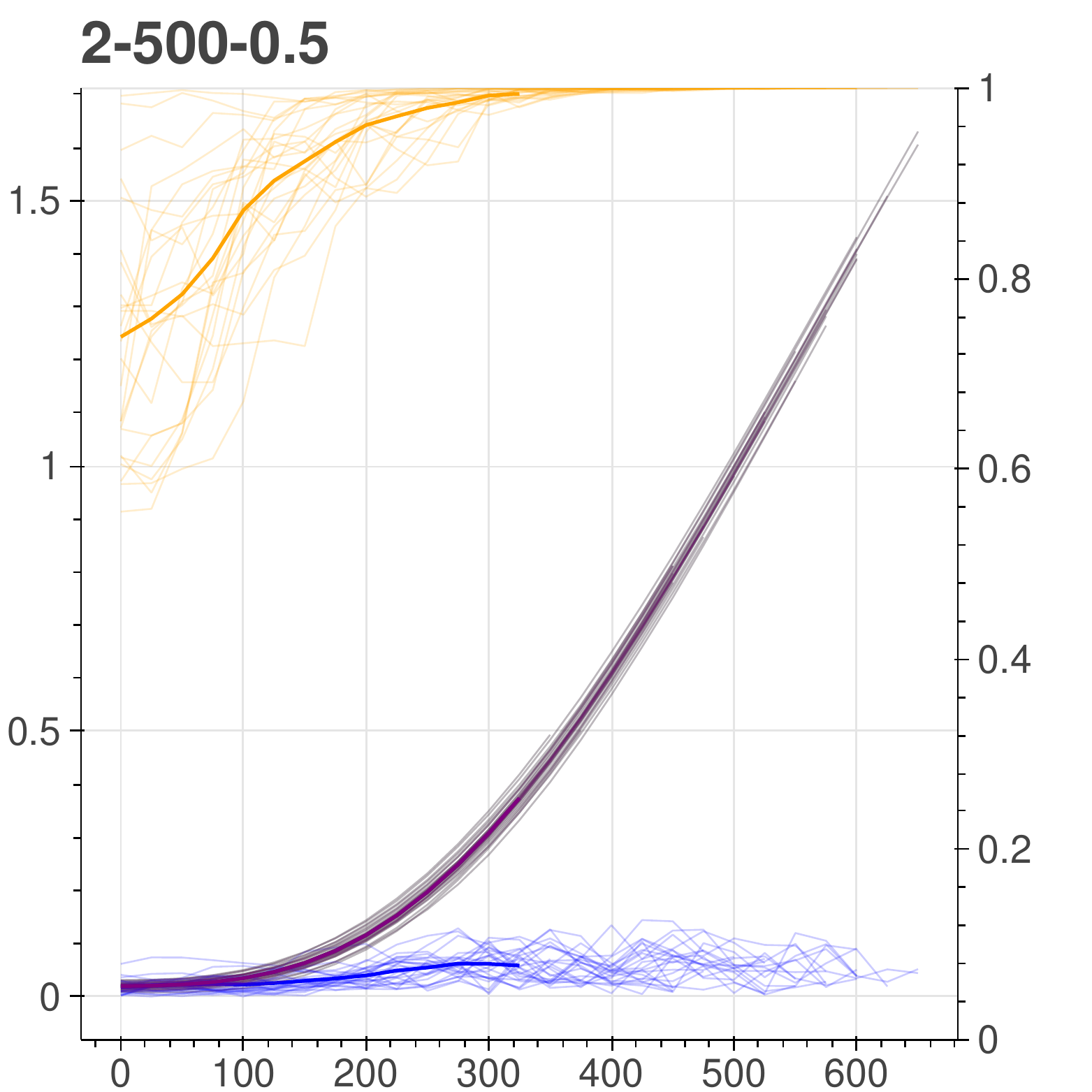}

\caption{The average weights over the features during training a two-layer model.
The upper and the lower rows are the curves of model without and with pretraining respectively.
From left to right, $(d_1, \nu) = (50, 0.04), (50, 0.5), (500, 0.04), (500, 0.5)$.
Blue, green, purple curves represent the average weights over features in \textcolor{blue}{$X_1$}, \textcolor{DarkGreen}{$X_2$}, and \textcolor{purple}{$\dot{X}_2$} (the middle part of $X_2$) respectively. The orange curve represents the \textcolor{orange}{accuracy}. Note that the scales of the $x$-axis are different in the plots.}
\label{fig:toy-curves}
\end{figure*}

\subsection{Limitations of Our Theorems}
Our theories do not ensure that $\Pi_1$ is the most informative feature to learn.
Consider tokens in a sentence $X = \langle X_1, X_2, \cdots, X_L \rangle$ and let $\Pi_i$ be the conditional probability $P(X_i | X \setminus X_i)$.
A token with spurious association with the label can locate arbitrary position in the sentence, and its location is unknown during pretraining.
That is, the pretrained model is able to generate $\Pi_i$ for all $i$.
Without loss of generality, assume $X_1$ is the spurious token.
\textit{It is possible that there exists some $i$ such that $I(\Pi_1; Y) < I(\Pi_i; Y)$, and that $\Pi_i$ is predicted relying on $X_1$.}
Concretely, here is an example for the causal setting with three features:
$X_3$ is independent of $X_1$ and $X_2$ given $Y$ (Figure~\ref{fig:special-case}).
Using the results in Theorem~\ref{thm:recover-anticausal}, there is a linear mapping that can recover $P(Y | X_1, X_2)$ from $\Pi_3$.
Therefore, it is possible that $I(\Pi_3; Y) > I(\Pi_1; Y)$ if $I(X_1, X_2; Y) > I(\Pi_1; Y)$ depending on the distribution of the data.
We leave the study of $I(\Pi_i; Y)$ for future work.

Another limitation is that, in practice, NLP practitioners do not use the conditional probability predicted by the pretrained model.
Instead, people stack a simple layer over the pretrained model, and fine-tune the whole model on downstream task.
Regardless of this, we conjecture that the outputs generated by feeding the whole sentence without masking also contain the information of $\{ \Pi_{i} \}_{i=1}^{n}$ and thus are robust to spurious token-level feature.

\section{Toy Example with a Pretrained Model}
\label{sec:toy-example-ptr}

As the first step to close the gap between our theories and real world, we pretrain the experiment with the toy example.
We use the two-layer and three-layer MLP architectures same as in Section~\ref{sec:toy-example}.
Before fitting the model with $Y$, we first pretrain the first layer to predict $X_1$ based on $X_2$.
Specifically, we mask $X_1$ when pretraining, so the inputs are $[0, 0; X_2]$, and the first two dimensions of the outputs are used to compute cross entropy loss.
After pretraining, we conjecture that the first two dimensions of the outputs have the equivalent role of $\Pi_1$.
In order to allow the information from $X_1$ to compete with the output of the pretrained part fairly, we manually initialize the weights of the third and fourth dimension of the outputs with $[k, -k, 0, \cdots, 0]$ and $[-k, k, \cdots, 0, 0]$ respectively, where $k$ is the average of the absolute value of the weights in the pretrained part.
This ensures that the scales of the first four dimensions of the outputs are the same before fine-tuning it with the label $Y$.
Finally, we fine-tune the pretrained model with $([X_1; X_2], Y)$ pairs, and report the average number of iterations required to converge for 25 different random seeds.

Table~\ref{tab:toy-exp-ptr} shows that pretraining can always reduce the number of iterations required to converge, when $\nu < 0.50$.
The effect is more significant when $d_2$ equals 100 and 500.
It could be because of the higher sparsity of the learning signal.

We further inspect how the product of the weights in the layers change in the process of training.
We observe that if the model is not pretrained, the \textcolor{blue}{weights over $X_1$} grow faster than the \textcolor{DarkGreen}{weights over $X_2$} at the beginning (the first row Figure~\ref{fig:toy-curves}).
The model cannot converge to 100\% accuracy until \textcolor{purple}{weights on $\dot{X}_2$}, the central $\lceil \nu \times d_2 \rceil$ dimensions of $X_2$, become greater than the \textcolor{blue}{weights on $X_1$}.
In addition, after the model converges, \textcolor{blue}{weights over $X_1$} is still greater than \textcolor{DarkGreen}{weights over $X_2$}.
On the other hand, if the model is pretrained, \textcolor{blue}{weights over $X_1$} stop growing after a few steps (the second row in Figure~\ref{fig:toy-curves}).
The above observations are aligned with our theorems that pretraining help the model avoid the simplicity bias.

\begin{figure}
    \center
    \subfloat[Causal setting]{
        \begin{minipage}[c][0.75\width]{0.25\textwidth}
        \small
        \center
        \begin{tikzpicture}[node distance={10mm}, minimum size={7.5mm}, thick, main/.style = {draw, circle}]
            \node[main] (1) {$Z$};
            \node[main] [right of=1] (2) {$X_1$};
            \node[main] [above of=2] (3) {$X_2$};
            \node[main] [right of=2] (4) {$Y$};
            \draw[->] (1) -- (2);
            \draw[->] (1) -- (3);
            \draw[->] (2) -- (4);
        \end{tikzpicture}
        \end{minipage}
        \label{fig:causal-graph}
    }
    \subfloat[Anticausal setting]{
        \begin{minipage}[c][0.75\width]{0.25\textwidth}
            \center
            \small
            \begin{tikzpicture}[node distance={10mm}, minimum size={7.5mm}, thick, main/.style = {draw, circle}]
                \node[main] (1) {$Z$};
                \node[main] [right of=1] (2) {$Y$};
                \node[main] [above of=2] (3) {$Q$};
                \node[main] [right of=2] (4) {$X_1$};
                \node[main] [right of=3] (5) {$X_2$};
                \draw[->] (1) -- (2);
                \draw[->] (1) -- (3);
                \draw[->] (2) -- (4);
                \draw[->] (3) -- (5);
            \end{tikzpicture}
        \end{minipage}
        \label{fig:anticausal-graph}
    }   
   \subfloat[A case where $I(\Pi_3; Y) \geq I(\Pi_1; Y)$ is possible.]{
        \begin{minipage}[c]{0.25\textwidth}
        \small
        \center
        \begin{tikzpicture}[node distance={10mm}, minimum size={7.5mm}, thick, main/.style = {draw, circle}]
            \node[main] (1) {$Z$};
            \node[main] [right of=1] (2) {$X_2$};
            \node[main] [above of=2] (3) {$X_1$};
            \node[main] [right of=2] (4) {$Y$};
            \node[main] [below of=2] (5) {$X_3$};
            \draw[->] (1) -- (2);
            \draw[->] (1) -- (3);
            \draw[->] (2) -- (4);
            \draw[->] (5) -- (4);
        \end{tikzpicture}
        \end{minipage}
        \label{fig:special-case}
    }
    \caption{The causal settings of the $(X, Y)$ pairs.}
\end{figure}
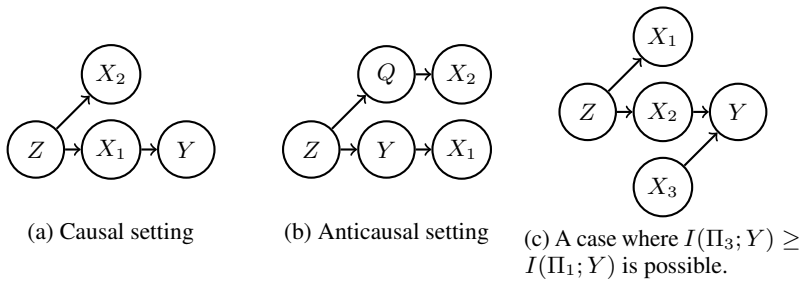

\section{Experiments}
\label{sec:experiments}

To further validate our theories, we experiment on real world NLP tasks.
We facilitate datasets that have known spurious features.
We first pretrain models on the training dataset with different MLM settings.
In different settings, the probabilities of masking the spurious tokens are different.
Afterward, we fine-tune the model using the target label.
We show that the models will be less robust on downstream tasks if spurious tokens are not masked during pretraining,
and always masking the spurious token during pretraining improving the robstness.
The results validate our theories.

\subsection{Dowstream Tasks}

\paragraph{Hate Speech Detection}
Previous study has shown that hate speech detection datasets tend to have lexical bias \cite{dixon2018measuring}.
That is, models rely excessively on the presence or the absence of certain words when predicting the label.
Here we follow the formulation of lexical bias in hate speech detection proposed by \citet{zhou-etal-2021-challenges}.
We focus on the effect of non-offensive minority identity (NOI) mentions, such as "woman", "gay", "black".
Those mentions are often highly associative with hateful instances.
However, it is more desirable that a model does not rely on those mentions.
Therefore, we can see the presence of NOI as a spurious feature.

\paragraph{Name Entity Recognition (NER)} \citet{lin-etal-2020-rigorous} has shown that name entity recognition (NER) models perform worse when the name entities are not seen in the training data.
In this case, the content of the name entities can be seen as a spurious feature.
Models may learn to memorize the content when fitting the training data, while we may desire the model to recognize name entities according to the context.

\subsubsection{Datasets}

\paragraph{Hate Speech Detection} We use a portion of the dataset proposed by \citet{zhou-etal-2021-challenges}.
In their original dataset, only a small number of hateful instances contain NOI.
Our preliminary experiments show that model without pretraining does not suffer much from the bias of NOI when using the full data.
Therefore, we create a dataset of which positive (hateful) samples are all the positive samples in the original dataset that contain NOI, negative samples are negative samples randomly sampled from the original training set.
We control the number of negative samples so the ratio of positive and negative samples is the same as the original dataset.
We create both the training and the validation splits in this way, and use the original full testing set for evaluation.
We also evaluate the models on a NOI subset where all the instances contain NOI.

\paragraph{NER} We use the standard NER dataset Conll-2003~\cite{tjong-kim-sang-de-meulder-2003-introduction}.
To create a testing set with unseen name entities, we replace the name entities in the original validation and testing splits with the entities from WNUT-17~\cite{derczynski-etal-2017-results}.
Specifically, we replace the \texttt{LOC}, \texttt{ORG}, \texttt{PER} entities with the corresponding type of entities in WNUT-17, while the \texttt{MISC} entities remain untouched.

\subsection{Masking Policies}

For each sentence with $n_s$ spurious tokens, we experiment with different masking policies:
\textbf{(1) scratch}: We do not pretrain the model before fine-tuning.
\textbf{(2) vanilla}: During pretraining, we mask each token with 15\% probability, which is same as the original setting in \cite{devlin-etal-2019-bert}.
\textbf{(3) unmask random}: This is similar to vanilla MLM, but we uniformly randomly select $n_s$ tokens from the whole sentence and unmask them if they have been masked.
\textbf{(4) unmask spurious}: This is similar to vanilla MLM, but we unmask all the spurious tokens.
\textbf{(5) unknown spurious}: We replace spurious tokens with a special "[unk]" token, and we unmask them. Note that this setting can be seen as an oracle setting, since in most applications the spurious features are unknown.

Note that setting (3), (4), (5) have the same expected number of masked tokens.
Therefore, it rules out the possibility that the difference between their downstream performance is due to the number of masked tokens.
Instead, the difference should be resulted from whether spurious tokens are masked.

\begin{table}
\centering
\begin{tabular}{l|cc|cc|cc}
\toprule
\multirow{3}{*}{Mask Policy}       & \multicolumn{2}{c|}{NER}                                         & \multicolumn{4}{c}{Hate Speech Detection}                                                                                        \\
 & \multicolumn{1}{c}{Origin}     & \multicolumn{1}{c|}{Unseen}  & \multicolumn{2}{c|}{All (12893)}                                & \multicolumn{2}{c}{NOI (602)}                                   \\
                              & \multicolumn{1}{c}{F1 $\uparrow$}         & \multicolumn{1}{c|}{F1 $\uparrow$}        & \multicolumn{1}{c}{Accuracy $\uparrow$}   & \multicolumn{1}{c|}{F1 $\uparrow$}         & \multicolumn{1}{c}{Accuracy $\uparrow$}   & \multicolumn{1}{c}{FPR $\downarrow$}  \\
                              \midrule
scratch         & 61.5 \sd{0.5} & 38.7 \sd{0.6} & 83.9 \sd{1.6} & 80.3 \sd{1.4} & 74.8 \sd{1.5} & 46.3 \sd{7.2} \\
random          & 74.2 \sd{0.4} & 56.5 \sd{1.3} & 83.1 \sd{0.8} & 78.5 \sd{0.8} & 75.8 \sd{0.5} & 25.1 \sd{1.8} \\
unmask random   & 72.7 \sd{0.6} & 56.5 \sd{0.8} & 83.3 \sd{1.1} & 78.9 \sd{1.1} & 75.8 \sd{0.9} & 25.7 \sd{2.3} \\
unmask spurious & 72.9 \sd{0.5} & 53.2 \sd{0.8} & 84.1 \sd{0.7} & 79.8 \sd{0.6} & 73.7 \sd{1.0} & 32.5 \sd{2.1} \\
unkown spurious & 69.8 \sd{0.5} & 56.7 \sd{0.8} & 82.4 \sd{1.0} & 77.8 \sd{1.0} & 77.3 \sd{0.6} & 21.7 \sd{2.0} \\
\bottomrule
\end{tabular}
\caption{The performance on downstream tasks. For the hate speech detection task, we also report false positive detection (FPR) on the NOI subset, which is a set of instances containing non-offensive minority identity mentions, e.g. "women", "black". }
\label{table:results}
\end{table}

\subsection{Implementation Details}

For both of the tasks and all the MLM settings, we tokenize the input with the bert-base-uncased tokenizer.
We reinitialize all layers except the embedding layer before pretraining, and the embedding layer is frozen through the pretraining process.
We train the models until they converge, and choose the one with the lowest MLM loss on the validation set.
For the hate speech detection task, we use the implementation provided by \citet{zhou-etal-2021-challenges}. Except that we use bert-base-uncased instead of roberta-large, we use the other hyper parameters provided in their script. For the NER task, we use the implementation by Hugging Face \footnote{https://github.com/huggingface/transformers/blob/master/examples/pytorch/token-classification/run\_ner.py}.

\subsection{Result and Discussion}
Results in Table~\ref{table:results} validate our theorems.
For both of the tasks, \textit{unmask random} performs better than \textit{unmask spurious}.
Specifically, \textit{unmask random} has higher F1 on the unseen set of the NER task, and \textit{unmask random} has low false positive rate (FPR) on the NOI set.
Also, \textit{unmask random} performs similarly to \textit{random}.
This implies that modeling the condition distribution of spurious tokens in the original random masking pretraining can reduce models' reliance on them.
Note that \textit{unmask random} and \textit{unmask spurious} have similar in-distribution performance, so the performance difference is not due to better in-distribution generalization suggested by \citet{miller2021accuracy}.

We also compare \textit{unmask random} with the oracle setting \textit{unknown spurious}. 
We notice that even though \textit{unknown spurious} performs as well as \textit{random}, \textit{unknown spurious} hurts the performance in the seen set.
It indicates that modeling the conditional distribution of spurious tokens has effects beyond simply removing them from the model.
On the other hand, \textit{unknown spurious} performs better in the hate speech detection task.
A possible explanation is that NOI mentions contain little useful information for the task.

\section{Related Work}

Recently, there are efforts attempting to explain the effectiveness of massive language modeling pretraining.
Theoretically, \citet{saunshi2021a} explores why auto-regressive language models help solve downstream tasks.
However, their explanation is based on the assumption that the downstream tasks are \textit{natural tasks}, i.e. tasks that can be reformulated as sentence completion tasks.
Their explanation also requires the pretrain language model to perform well for any sentence completion tasks, which is not likely to be true in the real world.
\citet{wei2021pretrained} analyzes the effect of fine-tuning a pretrained MLM model.
Nonetheless, they assume that sentences in natural language are generated by an HMM process, and that the latent state distribution can be recovered from the conditional probability of a token given the context.
With a milder assumption, in this work we achieve similar results.
\citet{aghajanyan2020intrinsic} shows that pretrained models have lower intrinsic dimension, and provides a generalization bound based on \citet{arora2018stronger}.
However, why pretrained models have lower intrinsic dimension is still unknown.
Empirically, \citet{zhang-hashimoto-2021-inductive} shows that the effectiveness of MLM pretraining cannot be explained by formulating the downstream tasks as sentence completion problems.
\citet{sinha2021masked} finds evidence supporting the hypothesis that masked language models benefit from modeling high-order word co-occurrence instead of word order.
There are also some theories explaining the effective pretraining not specific to MLM \citet{lee2020predicting,pmlr-v97-saunshi19a,zhang-stratos-2021-understanding}.

\section{Implication}
Our results provide possible explanations for some common practices that are found effective empirically.
First, it could explain why continue pretraining on target dataset is useful.
It may be because continue pretraining allows the MLM model to better model the distribution of spurious features in the target dataset.
Thus the model can better avoid the simplicity pitfall and converges to a better solution.
Second, it provides reasons for more complex masking policies, such as masking a multi-token proper noun at the same time.
It may improve the robustness to spurious features that contain more than one token.
Third, if MLM can alleviate the simplicity bias and help the model to achieve a greater margin, it may also imply that the model has wider optima. This explains the finding in~\citet{hao-etal-2019-visualizing}.

\section{Conclusion}
In this work, we show an advantage of MLM pretraining, which partly explains its efficacy.
We first show the issue of simplicity bias when the input is discrete, and then theoretically and empirically prove that MLM pretraining can alleviate it.
Finally, our experiments on real world data also support our theories.
Our results shed light on future research on robustness and self-supervised learning.

\bibliography{iclr2022_conference,anthology}

\begin{thebibliography}{43}
\providecommand{\natexlab}[1]{#1}
\providecommand{\url}[1]{\texttt{#1}}
\expandafter\ifx\csname urlstyle\endcsname\relax
  \providecommand{\doi}[1]{doi: #1}\else
  \providecommand{\doi}{doi: \begingroup \urlstyle{rm}\Url}\fi

\bibitem[Aghajanyan et~al.(2020)Aghajanyan, Zettlemoyer, and
  Gupta]{aghajanyan2020intrinsic}
Armen Aghajanyan, Luke Zettlemoyer, and Sonal Gupta.
\newblock Intrinsic dimensionality explains the effectiveness of language model
  fine-tuning.
\newblock \emph{arXiv preprint arXiv:2012.13255}, 2020.

\bibitem[Arora et~al.(2018)Arora, Ge, Neyshabur, and Zhang]{arora2018stronger}
Sanjeev Arora, Rong Ge, Behnam Neyshabur, and Yi~Zhang.
\newblock Stronger generalization bounds for deep nets via a compression
  approach.
\newblock In \emph{International Conference on Machine Learning}, pp.\
  254--263. PMLR, 2018.

\bibitem[Belghazi et~al.(2018)Belghazi, Baratin, Rajeshwar, Ozair, Bengio,
  Courville, and Hjelm]{pmlr-v80-belghazi18a}
Mohamed~Ishmael Belghazi, Aristide Baratin, Sai Rajeshwar, Sherjil Ozair,
  Yoshua Bengio, Aaron Courville, and Devon Hjelm.
\newblock Mutual information neural estimation.
\newblock In Jennifer Dy and Andreas Krause (eds.), \emph{Proceedings of the
  35th International Conference on Machine Learning}, volume~80 of
  \emph{Proceedings of Machine Learning Research}, pp.\  531--540. PMLR, 10--15
  Jul 2018.
\newblock URL \url{https://proceedings.mlr.press/v80/belghazi18a.html}.

\bibitem[Bell \& Sejnowski(1995)Bell and Sejnowski]{bell1995information}
Anthony~J Bell and Terrence~J Sejnowski.
\newblock An information-maximization approach to blind separation and blind
  deconvolution.
\newblock \emph{Neural computation}, 7\penalty0 (6):\penalty0 1129--1159, 1995.

\bibitem[Chang et~al.(2020)Chang, Zhang, Yu, and Jaakkola]{chang2020invariant}
Shiyu Chang, Yang Zhang, Mo~Yu, and Tommi Jaakkola.
\newblock Invariant rationalization.
\newblock In \emph{International Conference on Machine Learning}, pp.\
  1448--1458. PMLR, 2020.

\bibitem[Derczynski et~al.(2017)Derczynski, Nichols, van Erp, and
  Limsopatham]{derczynski-etal-2017-results}
Leon Derczynski, Eric Nichols, Marieke van Erp, and Nut Limsopatham.
\newblock Results of the {WNUT}2017 shared task on novel and emerging entity
  recognition.
\newblock In \emph{Proceedings of the 3rd Workshop on Noisy User-generated
  Text}, pp.\  140--147, Copenhagen, Denmark, September 2017. Association for
  Computational Linguistics.
\newblock \doi{10.18653/v1/W17-4418}.
\newblock URL \url{https://www.aclweb.org/anthology/W17-4418}.

\bibitem[Devlin et~al.(2019)Devlin, Chang, Lee, and
  Toutanova]{devlin-etal-2019-bert}
Jacob Devlin, Ming-Wei Chang, Kenton Lee, and Kristina Toutanova.
\newblock {BERT}: Pre-training of deep bidirectional transformers for language
  understanding.
\newblock In \emph{Proceedings of the 2019 Conference of the North {A}merican
  Chapter of the Association for Computational Linguistics: Human Language
  Technologies, Volume 1 (Long and Short Papers)}, pp.\  4171--4186,
  Minneapolis, Minnesota, June 2019. Association for Computational Linguistics.
\newblock \doi{10.18653/v1/N19-1423}.
\newblock URL \url{https://www.aclweb.org/anthology/N19-1423}.

\bibitem[Dixon et~al.(2018)Dixon, Li, Sorensen, Thain, and
  Vasserman]{dixon2018measuring}
Lucas Dixon, John Li, Jeffrey Sorensen, Nithum Thain, and Lucy Vasserman.
\newblock Measuring and mitigating unintended bias in text classification.
\newblock In \emph{Proceedings of the 2018 AAAI/ACM Conference on AI, Ethics,
  and Society}, pp.\  67--73, 2018.

\bibitem[Gonen \& Goldberg(2019)Gonen and
  Goldberg]{gonen-goldberg-2019-lipstick}
Hila Gonen and Yoav Goldberg.
\newblock Lipstick on a pig: Debiasing methods cover up systematic gender
  biases in word embeddings but do not remove them.
\newblock In \emph{Proceedings of the 2019 Workshop on Widening NLP}, pp.\
  60--63, Florence, Italy, August 2019. Association for Computational
  Linguistics.
\newblock URL \url{https://www.aclweb.org/anthology/W19-3621}.

\bibitem[Gururangan et~al.(2020)Gururangan, Marasovi{\'c}, Swayamdipta, Lo,
  Beltagy, Downey, and Smith]{gururangan-etal-2020-dont}
Suchin Gururangan, Ana Marasovi{\'c}, Swabha Swayamdipta, Kyle Lo, Iz~Beltagy,
  Doug Downey, and Noah~A. Smith.
\newblock Don{'}t stop pretraining: Adapt language models to domains and tasks.
\newblock In \emph{Proceedings of the 58th Annual Meeting of the Association
  for Computational Linguistics}, pp.\  8342--8360, Online, July 2020.
  Association for Computational Linguistics.
\newblock \doi{10.18653/v1/2020.acl-main.740}.
\newblock URL \url{https://www.aclweb.org/anthology/2020.acl-main.740}.

\bibitem[Hao et~al.(2019)Hao, Dong, Wei, and Xu]{hao-etal-2019-visualizing}
Yaru Hao, Li~Dong, Furu Wei, and Ke~Xu.
\newblock Visualizing and understanding the effectiveness of {BERT}.
\newblock In \emph{Proceedings of the 2019 Conference on Empirical Methods in
  Natural Language Processing and the 9th International Joint Conference on
  Natural Language Processing (EMNLP-IJCNLP)}, pp.\  4143--4152, Hong Kong,
  China, November 2019. Association for Computational Linguistics.
\newblock \doi{10.18653/v1/D19-1424}.
\newblock URL \url{https://www.aclweb.org/anthology/D19-1424}.

\bibitem[Hawkins et~al.(2020)Hawkins, Yamakoshi, Griffiths, and
  Goldberg]{hawkins-etal-2020-investigating}
Robert Hawkins, Takateru Yamakoshi, Thomas Griffiths, and Adele Goldberg.
\newblock Investigating representations of verb bias in neural language models.
\newblock In \emph{Proceedings of the 2020 Conference on Empirical Methods in
  Natural Language Processing (EMNLP)}, pp.\  4653--4663, Online, November
  2020. Association for Computational Linguistics.
\newblock \doi{10.18653/v1/2020.emnlp-main.376}.
\newblock URL \url{https://www.aclweb.org/anthology/2020.emnlp-main.376}.

\bibitem[He et~al.(2019)He, Zha, and Wang]{he-etal-2019-unlearn}
He~He, Sheng Zha, and Haohan Wang.
\newblock Unlearn dataset bias in natural language inference by fitting the
  residual.
\newblock In \emph{Proceedings of the 2nd Workshop on Deep Learning Approaches
  for Low-Resource NLP (DeepLo 2019)}, pp.\  132--142, Hong Kong, China,
  November 2019. Association for Computational Linguistics.
\newblock \doi{10.18653/v1/D19-6115}.
\newblock URL \url{https://www.aclweb.org/anthology/D19-6115}.

\bibitem[Hendrycks et~al.(2020)Hendrycks, Liu, Wallace, Dziedzic, Krishnan, and
  Song]{hendrycks-etal-2020-pretrained}
Dan Hendrycks, Xiaoyuan Liu, Eric Wallace, Adam Dziedzic, Rishabh Krishnan, and
  Dawn Song.
\newblock Pretrained transformers improve out-of-distribution robustness.
\newblock In \emph{Proceedings of the 58th Annual Meeting of the Association
  for Computational Linguistics}, pp.\  2744--2751, Online, July 2020.
  Association for Computational Linguistics.
\newblock \doi{10.18653/v1/2020.acl-main.244}.
\newblock URL \url{https://www.aclweb.org/anthology/2020.acl-main.244}.

\bibitem[Hjelm et~al.(2019)Hjelm, Fedorov, Lavoie-Marchildon, Grewal, Bachman,
  Trischler, and Bengio]{hjelm2018learning}
R~Devon Hjelm, Alex Fedorov, Samuel Lavoie-Marchildon, Karan Grewal, Phil
  Bachman, Adam Trischler, and Yoshua Bengio.
\newblock Learning deep representations by mutual information estimation and
  maximization.
\newblock In \emph{International Conference on Learning Representations}, 2019.
\newblock URL \url{https://openreview.net/forum?id=Bklr3j0cKX}.

\bibitem[Kalimeris et~al.(2019)Kalimeris, Kaplun, Nakkiran, Edelman, Yang,
  Barak, and Zhang]{kalimeris2019sgd}
Dimitris Kalimeris, Gal Kaplun, Preetum Nakkiran, Benjamin Edelman, Tristan
  Yang, Boaz Barak, and Haofeng Zhang.
\newblock Sgd on neural networks learns functions of increasing complexity.
\newblock \emph{Advances in Neural Information Processing Systems},
  32:\penalty0 3496--3506, 2019.

\bibitem[Karimi~Mahabadi et~al.(2020)Karimi~Mahabadi, Belinkov, and
  Henderson]{karimi-mahabadi-etal-2020-end}
Rabeeh Karimi~Mahabadi, Yonatan Belinkov, and James Henderson.
\newblock End-to-end bias mitigation by modelling biases in corpora.
\newblock In \emph{Proceedings of the 58th Annual Meeting of the Association
  for Computational Linguistics}, pp.\  8706--8716, Online, July 2020.
  Association for Computational Linguistics.
\newblock \doi{10.18653/v1/2020.acl-main.769}.
\newblock URL \url{https://www.aclweb.org/anthology/2020.acl-main.769}.

\bibitem[Kennedy et~al.(2020)Kennedy, Jin, Mostafazadeh~Davani, Dehghani, and
  Ren]{kennedy-etal-2020-contextualizing}
Brendan Kennedy, Xisen Jin, Aida Mostafazadeh~Davani, Morteza Dehghani, and
  Xiang Ren.
\newblock Contextualizing hate speech classifiers with post-hoc explanation.
\newblock In \emph{Proceedings of the 58th Annual Meeting of the Association
  for Computational Linguistics}, pp.\  5435--5442, Online, July 2020.
  Association for Computational Linguistics.
\newblock \doi{10.18653/v1/2020.acl-main.483}.
\newblock URL \url{https://www.aclweb.org/anthology/2020.acl-main.483}.

\bibitem[Kumar et~al.(2020)Kumar, Bhotia, Kumar, and
  Chakraborty]{kumar-etal-2020-nurse}
Vaibhav Kumar, Tenzin~Singhay Bhotia, Vaibhav Kumar, and Tanmoy Chakraborty.
\newblock Nurse is closer to woman than surgeon? mitigating gender-biased
  proximities in word embeddings.
\newblock \emph{Transactions of the Association for Computational Linguistics},
  8:\penalty0 486--503, 2020.
\newblock \doi{10.1162/tacl_a_00327}.
\newblock URL \url{https://www.aclweb.org/anthology/2020.tacl-1.32}.

\bibitem[Kurita et~al.(2019)Kurita, Vyas, Pareek, Black, and
  Tsvetkov]{kurita-etal-2019-measuring}
Keita Kurita, Nidhi Vyas, Ayush Pareek, Alan~W Black, and Yulia Tsvetkov.
\newblock Measuring bias in contextualized word representations.
\newblock In \emph{Proceedings of the First Workshop on Gender Bias in Natural
  Language Processing}, pp.\  166--172, Florence, Italy, August 2019.
  Association for Computational Linguistics.
\newblock \doi{10.18653/v1/W19-3823}.
\newblock URL \url{https://www.aclweb.org/anthology/W19-3823}.

\bibitem[Lee et~al.(2020)Lee, Lei, Saunshi, and Zhuo]{lee2020predicting}
Jason~D Lee, Qi~Lei, Nikunj Saunshi, and Jiacheng Zhuo.
\newblock Predicting what you already know helps: Provable self-supervised
  learning.
\newblock \emph{arXiv preprint arXiv:2008.01064}, 2020.

\bibitem[Lin et~al.(2020)Lin, Lu, Tang, Han, Sun, Wei, and
  Yuan]{lin-etal-2020-rigorous}
Hongyu Lin, Yaojie Lu, Jialong Tang, Xianpei Han, Le~Sun, Zhicheng Wei, and
  Nicholas~Jing Yuan.
\newblock A rigorous study on named entity recognition: Can fine-tuning
  pretrained model lead to the promised land?
\newblock In \emph{Proceedings of the 2020 Conference on Empirical Methods in
  Natural Language Processing (EMNLP)}, pp.\  7291--7300, Online, November
  2020. Association for Computational Linguistics.
\newblock \doi{10.18653/v1/2020.emnlp-main.592}.
\newblock URL \url{https://www.aclweb.org/anthology/2020.emnlp-main.592}.

\bibitem[Linsker(1988)]{infomax1}
R.~Linsker.
\newblock Self-organization in a perceptual network.
\newblock \emph{Computer}, 21\penalty0 (3):\penalty0 105--117, 1988.
\newblock \doi{10.1109/2.36}.

\bibitem[Liu et~al.(2020)Liu, Wang, Wang, Liu, Liu, and
  Tang]{liu-etal-2020-mitigating}
Haochen Liu, Wentao Wang, Yiqi Wang, Hui Liu, Zitao Liu, and Jiliang Tang.
\newblock Mitigating gender bias for neural dialogue generation with
  adversarial learning.
\newblock In \emph{Proceedings of the 2020 Conference on Empirical Methods in
  Natural Language Processing (EMNLP)}, pp.\  893--903, Online, November 2020.
  Association for Computational Linguistics.
\newblock \doi{10.18653/v1/2020.emnlp-main.64}.
\newblock URL \url{https://www.aclweb.org/anthology/2020.emnlp-main.64}.

\bibitem[Liu et~al.(2019)Liu, Ott, Goyal, Du, Joshi, Chen, Levy, Lewis,
  Zettlemoyer, and Stoyanov]{liu2019roberta}
Yinhan Liu, Myle Ott, Naman Goyal, Jingfei Du, Mandar Joshi, Danqi Chen, Omer
  Levy, Mike Lewis, Luke Zettlemoyer, and Veselin Stoyanov.
\newblock Roberta: A robustly optimized bert pretraining approach.
\newblock \emph{arXiv preprint arXiv:1907.11692}, 2019.

\bibitem[Miller et~al.(2021)Miller, Taori, Raghunathan, Sagawa, Koh, Shankar,
  Liang, Carmon, and Schmidt]{miller2021accuracy}
John~P Miller, Rohan Taori, Aditi Raghunathan, Shiori Sagawa, Pang~Wei Koh,
  Vaishaal Shankar, Percy Liang, Yair Carmon, and Ludwig Schmidt.
\newblock Accuracy on the line: on the strong correlation between
  out-of-distribution and in-distribution generalization.
\newblock In \emph{International Conference on Machine Learning}, pp.\
  7721--7735. PMLR, 2021.

\bibitem[Oord et~al.(2018)Oord, Li, and Vinyals]{oord2018representation}
Aaron van~den Oord, Yazhe Li, and Oriol Vinyals.
\newblock Representation learning with contrastive predictive coding.
\newblock \emph{arXiv preprint arXiv:1807.03748}, 2018.

\bibitem[Sagawa* et~al.(2020)Sagawa*, Koh*, Hashimoto, and
  Liang]{Sagawa2020Distributionally}
Shiori Sagawa*, Pang~Wei Koh*, Tatsunori~B. Hashimoto, and Percy Liang.
\newblock Distributionally robust neural networks.
\newblock In \emph{International Conference on Learning Representations}, 2020.
\newblock URL \url{https://openreview.net/forum?id=ryxGuJrFvS}.

\bibitem[Saunshi et~al.(2019)Saunshi, Plevrakis, Arora, Khodak, and
  Khandeparkar]{pmlr-v97-saunshi19a}
Nikunj Saunshi, Orestis Plevrakis, Sanjeev Arora, Mikhail Khodak, and
  Hrishikesh Khandeparkar.
\newblock A theoretical analysis of contrastive unsupervised representation
  learning.
\newblock In Kamalika Chaudhuri and Ruslan Salakhutdinov (eds.),
  \emph{Proceedings of the 36th International Conference on Machine Learning},
  volume~97 of \emph{Proceedings of Machine Learning Research}, pp.\
  5628--5637. PMLR, 09--15 Jun 2019.
\newblock URL \url{https://proceedings.mlr.press/v97/saunshi19a.html}.

\bibitem[Saunshi et~al.(2021)Saunshi, Malladi, and Arora]{saunshi2021a}
Nikunj Saunshi, Sadhika Malladi, and Sanjeev Arora.
\newblock A mathematical exploration of why language models help solve
  downstream tasks.
\newblock In \emph{International Conference on Learning Representations}, 2021.
\newblock URL \url{https://openreview.net/forum?id=vVjIW3sEc1s}.

\bibitem[Shah et~al.(2020)Shah, Tamuly, Raghunathan, Jain, and
  Netrapalli]{shah2020pitfalls}
Harshay Shah, Kaustav Tamuly, Aditi Raghunathan, Prateek Jain, and Praneeth
  Netrapalli.
\newblock The pitfalls of simplicity bias in neural networks.
\newblock In \emph{NeurIPS}, 2020.

\bibitem[Sinha et~al.(2021)Sinha, Jia, Hupkes, Pineau, Williams, and
  Kiela]{sinha2021masked}
Koustuv Sinha, Robin Jia, Dieuwke Hupkes, Joelle Pineau, Adina Williams, and
  Douwe Kiela.
\newblock Masked language modeling and the distributional hypothesis: Order
  word matters pre-training for little.
\newblock \emph{arXiv preprint arXiv:2104.06644}, 2021.

\bibitem[Tjong Kim~Sang \& De~Meulder(2003)Tjong Kim~Sang and
  De~Meulder]{tjong-kim-sang-de-meulder-2003-introduction}
Erik~F. Tjong Kim~Sang and Fien De~Meulder.
\newblock Introduction to the {C}o{NLL}-2003 shared task: Language-independent
  named entity recognition.
\newblock In \emph{Proceedings of the Seventh Conference on Natural Language
  Learning at {HLT}-{NAACL} 2003}, pp.\  142--147, 2003.
\newblock URL \url{https://www.aclweb.org/anthology/W03-0419}.

\bibitem[Tu et~al.(2020)Tu, Lalwani, Gella, and He]{tu-etal-2020-empirical}
Lifu Tu, Garima Lalwani, Spandana Gella, and He~He.
\newblock An empirical study on robustness to spurious correlations using
  pre-trained language models.
\newblock \emph{Transactions of the Association for Computational Linguistics},
  8:\penalty0 621--633, 2020.
\newblock \doi{10.1162/tacl_a_00335}.
\newblock URL \url{https://www.aclweb.org/anthology/2020.tacl-1.40}.

\bibitem[Utama et~al.(2020{\natexlab{a}})Utama, Moosavi, and
  Gurevych]{utama-etal-2020-mind}
Prasetya~Ajie Utama, Nafise~Sadat Moosavi, and Iryna Gurevych.
\newblock Mind the trade-off: Debiasing {NLU} models without degrading the
  in-distribution performance.
\newblock In \emph{Proceedings of the 58th Annual Meeting of the Association
  for Computational Linguistics}, pp.\  8717--8729, Online, July
  2020{\natexlab{a}}. Association for Computational Linguistics.
\newblock \doi{10.18653/v1/2020.acl-main.770}.
\newblock URL \url{https://www.aclweb.org/anthology/2020.acl-main.770}.

\bibitem[Utama et~al.(2020{\natexlab{b}})Utama, Moosavi, and
  Gurevych]{utama-etal-2020-towards}
Prasetya~Ajie Utama, Nafise~Sadat Moosavi, and Iryna Gurevych.
\newblock Towards debiasing {NLU} models from unknown biases.
\newblock In \emph{Proceedings of the 2020 Conference on Empirical Methods in
  Natural Language Processing (EMNLP)}, pp.\  7597--7610, Online, November
  2020{\natexlab{b}}. Association for Computational Linguistics.
\newblock \doi{10.18653/v1/2020.emnlp-main.613}.
\newblock URL \url{https://www.aclweb.org/anthology/2020.emnlp-main.613}.

\bibitem[Vargas \& Cotterell(2020)Vargas and
  Cotterell]{vargas-cotterell-2020-exploring}
Francisco Vargas and Ryan Cotterell.
\newblock Exploring the linear subspace hypothesis in gender bias mitigation.
\newblock In \emph{Proceedings of the 2020 Conference on Empirical Methods in
  Natural Language Processing (EMNLP)}, pp.\  2902--2913, Online, November
  2020. Association for Computational Linguistics.
\newblock \doi{10.18653/v1/2020.emnlp-main.232}.
\newblock URL \url{https://www.aclweb.org/anthology/2020.emnlp-main.232}.

\bibitem[Wang et~al.(2021)Wang, Yan, He, Wu, and
  Xu]{wang-etal-2021-dynamically}
Liwen Wang, Yuanmeng Yan, Keqing He, Yanan Wu, and Weiran Xu.
\newblock Dynamically disentangling social bias from task-oriented
  representations with adversarial attack.
\newblock In \emph{Proceedings of the 2021 Conference of the North American
  Chapter of the Association for Computational Linguistics: Human Language
  Technologies}, pp.\  3740--3750, Online, June 2021. Association for
  Computational Linguistics.
\newblock URL \url{https://www.aclweb.org/anthology/2021.naacl-main.293}.

\bibitem[Wei et~al.(2021)Wei, Xie, and Ma]{wei2021pretrained}
Colin Wei, Sang~Michael Xie, and Tengyu Ma.
\newblock Why do pretrained language models help in downstream tasks? an
  analysis of head and prompt tuning.
\newblock \emph{arXiv preprint arXiv:2106.09226}, 2021.

\bibitem[Zhang \& Hashimoto(2021)Zhang and
  Hashimoto]{zhang-hashimoto-2021-inductive}
Tianyi Zhang and Tatsunori Hashimoto.
\newblock On the inductive bias of masked language modeling: From statistical
  to syntactic dependencies.
\newblock In \emph{Proceedings of the 2021 Conference of the North American
  Chapter of the Association for Computational Linguistics: Human Language
  Technologies}, pp.\  5131--5146, Online, June 2021. Association for
  Computational Linguistics.
\newblock URL \url{https://www.aclweb.org/anthology/2021.naacl-main.404}.

\bibitem[Zhang \& Stratos(2021)Zhang and
  Stratos]{zhang-stratos-2021-understanding}
Wenzheng Zhang and Karl Stratos.
\newblock Understanding hard negatives in noise contrastive estimation.
\newblock In \emph{Proceedings of the 2021 Conference of the North American
  Chapter of the Association for Computational Linguistics: Human Language
  Technologies}, pp.\  1090--1101, Online, June 2021. Association for
  Computational Linguistics.
\newblock URL \url{https://www.aclweb.org/anthology/2021.naacl-main.86}.

\bibitem[Zhao et~al.(2019)Zhao, Wang, Yatskar, Cotterell, Ordonez, and
  Chang]{zhao-etal-2019-gender}
Jieyu Zhao, Tianlu Wang, Mark Yatskar, Ryan Cotterell, Vicente Ordonez, and
  Kai-Wei Chang.
\newblock Gender bias in contextualized word embeddings.
\newblock In \emph{Proceedings of the 2019 Conference of the North {A}merican
  Chapter of the Association for Computational Linguistics: Human Language
  Technologies, Volume 1 (Long and Short Papers)}, pp.\  629--634, Minneapolis,
  Minnesota, June 2019. Association for Computational Linguistics.
\newblock \doi{10.18653/v1/N19-1064}.
\newblock URL \url{https://www.aclweb.org/anthology/N19-1064}.

\bibitem[Zhou et~al.(2021)Zhou, Sap, Swayamdipta, Choi, and
  Smith]{zhou-etal-2021-challenges}
Xuhui Zhou, Maarten Sap, Swabha Swayamdipta, Yejin Choi, and Noah Smith.
\newblock Challenges in automated debiasing for toxic language detection.
\newblock In \emph{Proceedings of the 16th Conference of the European Chapter
  of the Association for Computational Linguistics: Main Volume}, pp.\
  3143--3155, Online, April 2021. Association for Computational Linguistics.
\newblock URL \url{https://www.aclweb.org/anthology/2021.eacl-main.274}.

\end{thebibliography}
\bibliographystyle{iclr2022_conference}

\appendix
\section{Appendix}
\begin{figure*}
\includegraphics[width=0.24\linewidth]{figures/wo-ptr/2-50-0.04.pdf}
\includegraphics[width=0.24\linewidth]{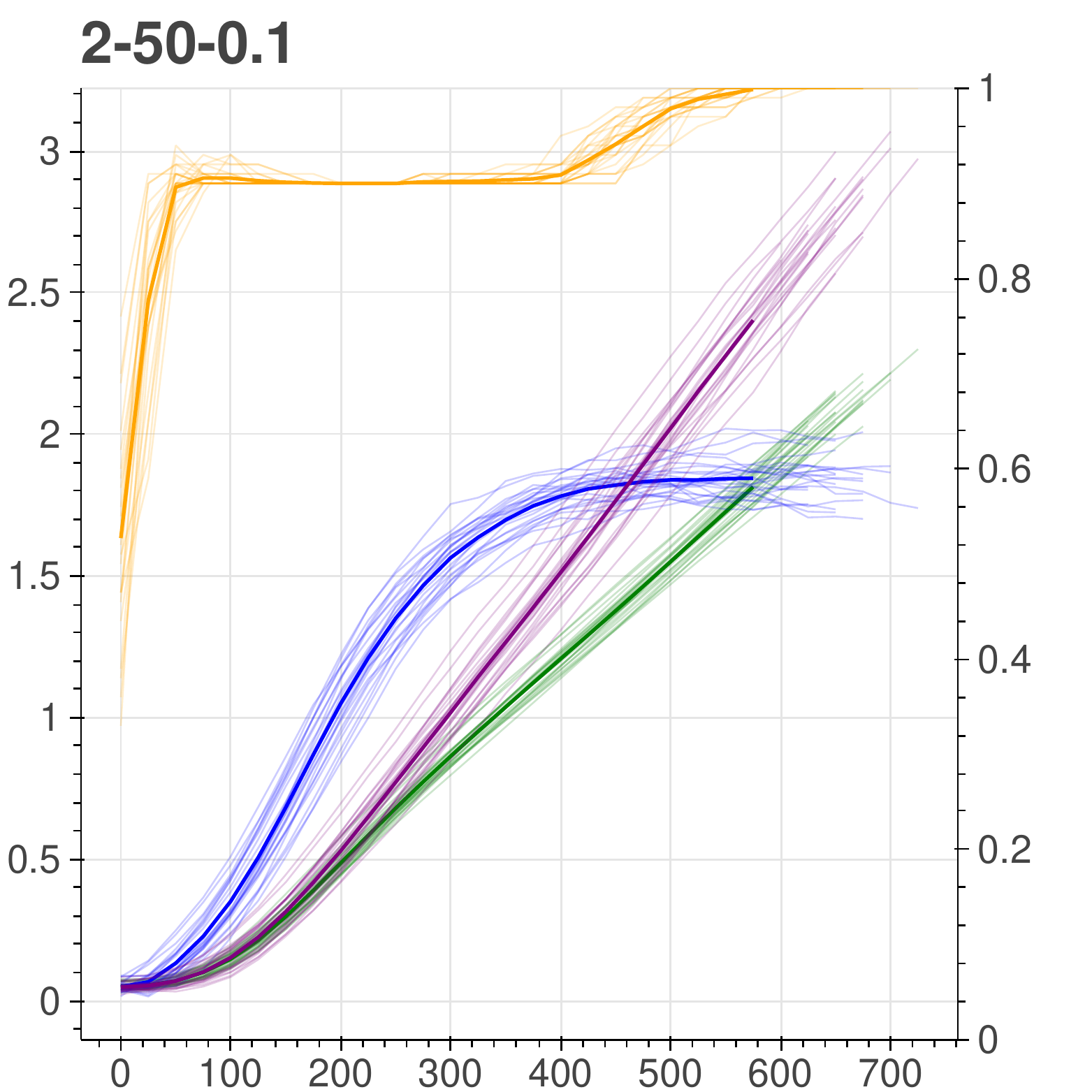}
\includegraphics[width=0.24\linewidth]{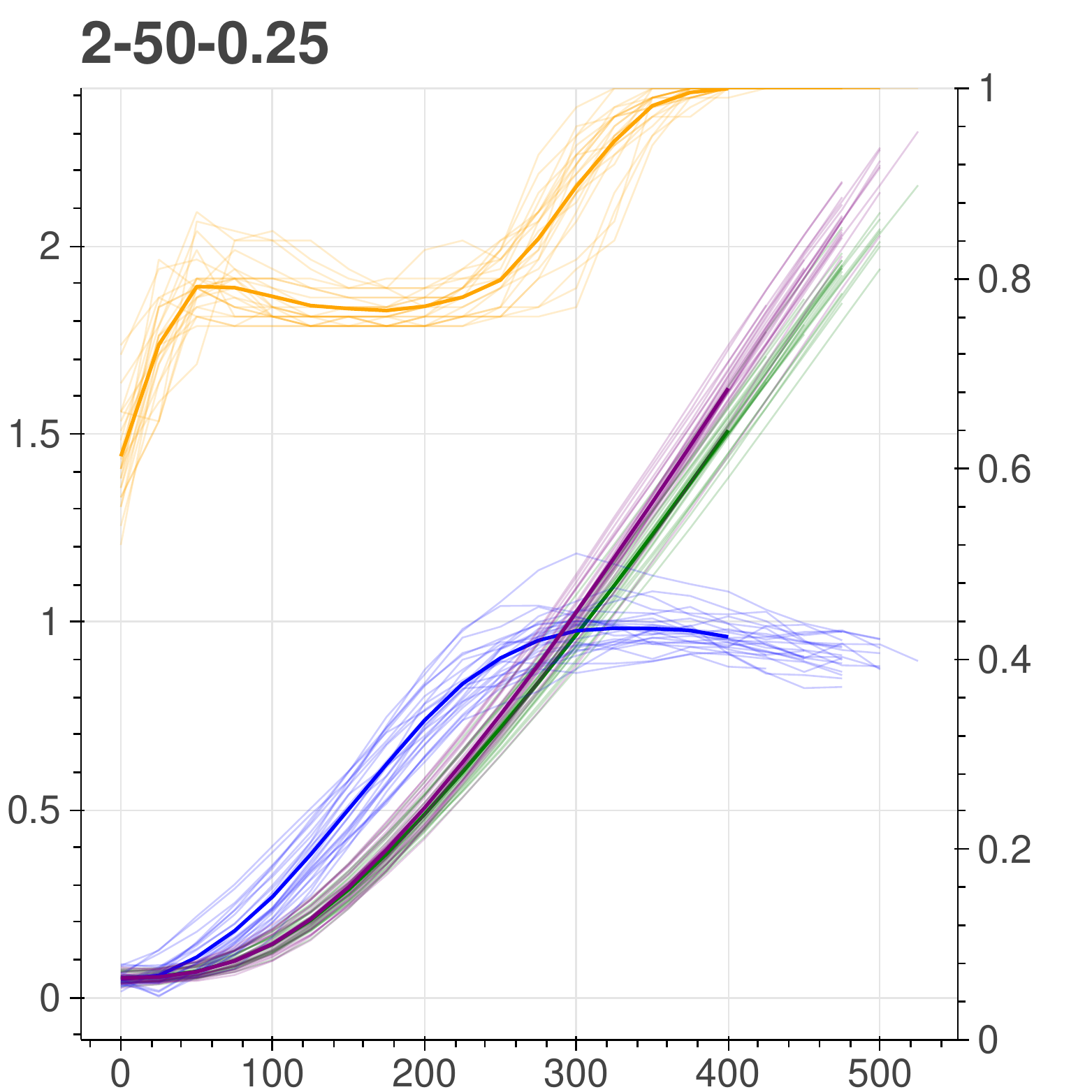}
\includegraphics[width=0.24\linewidth]{figures/wo-ptr/2-50-0.5.pdf}

\includegraphics[width=0.24\linewidth]{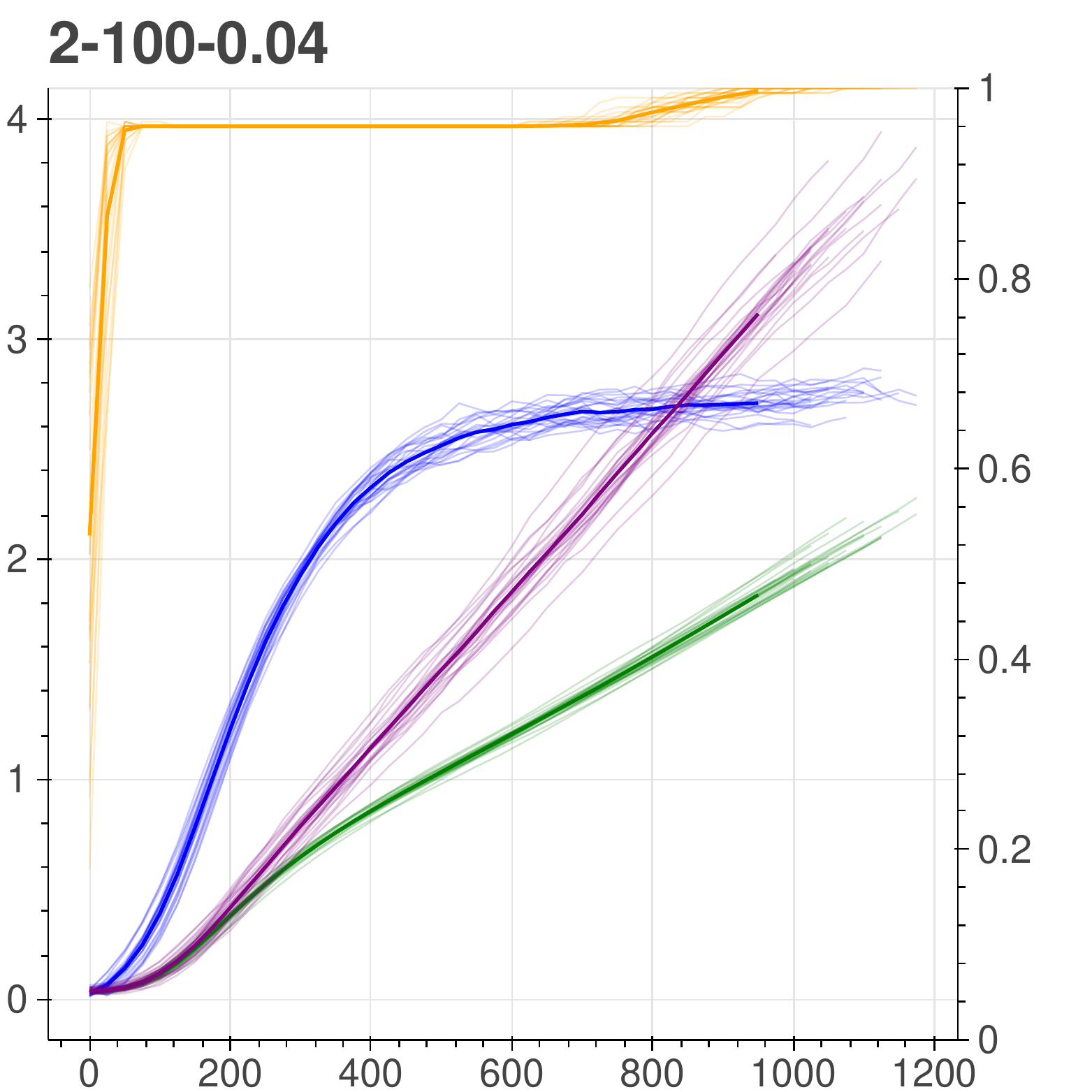}
\includegraphics[width=0.24\linewidth]{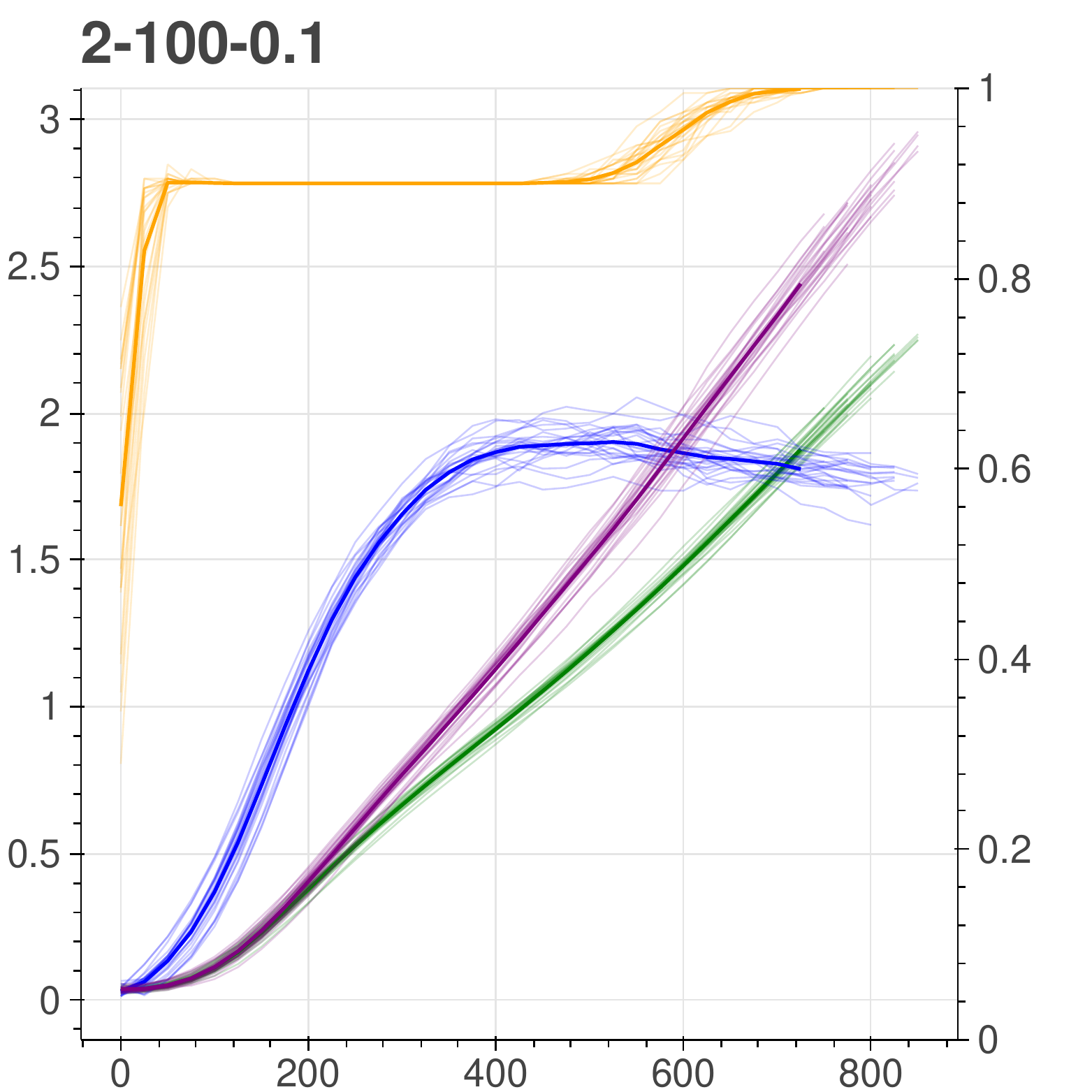}
\includegraphics[width=0.24\linewidth]{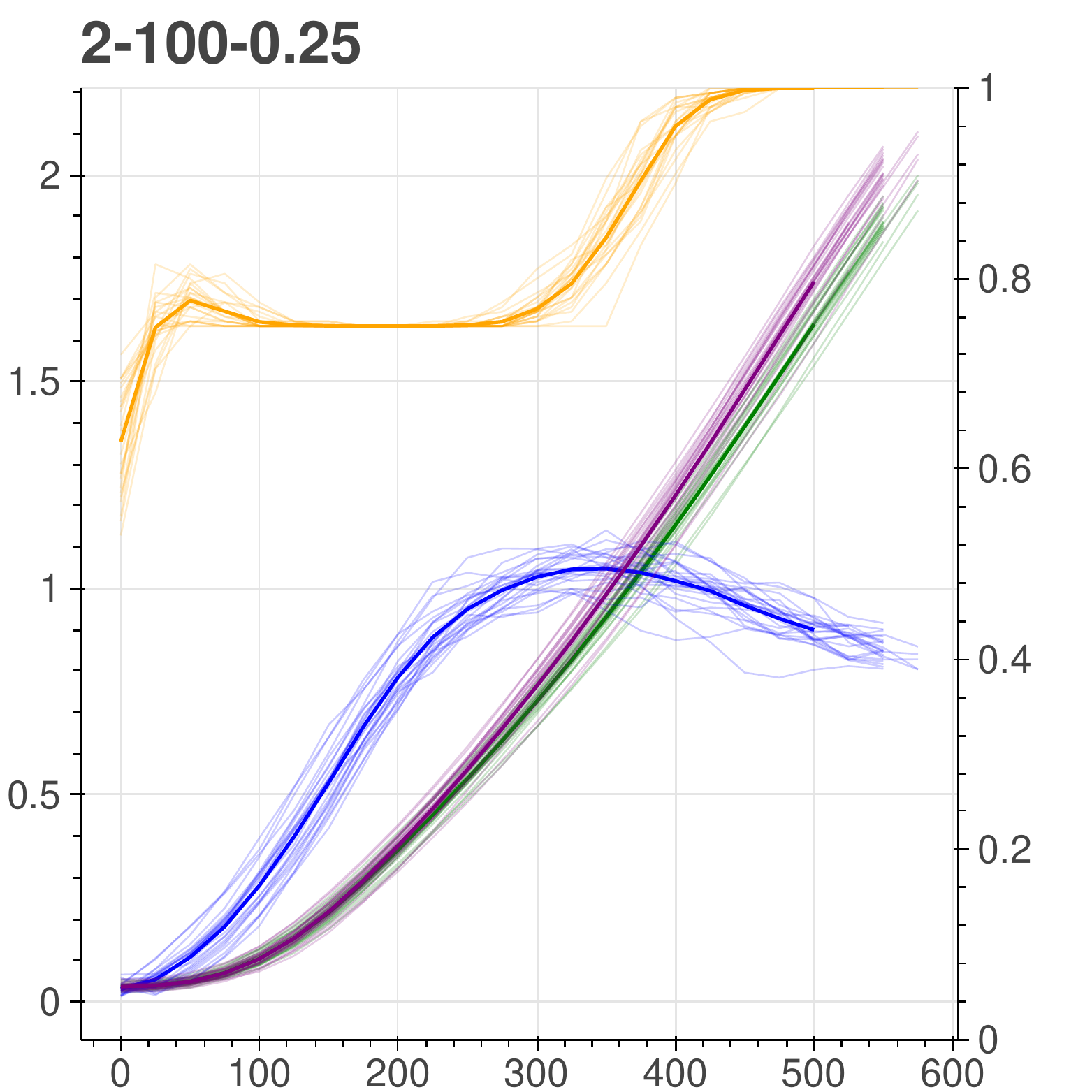}
\includegraphics[width=0.24\linewidth]{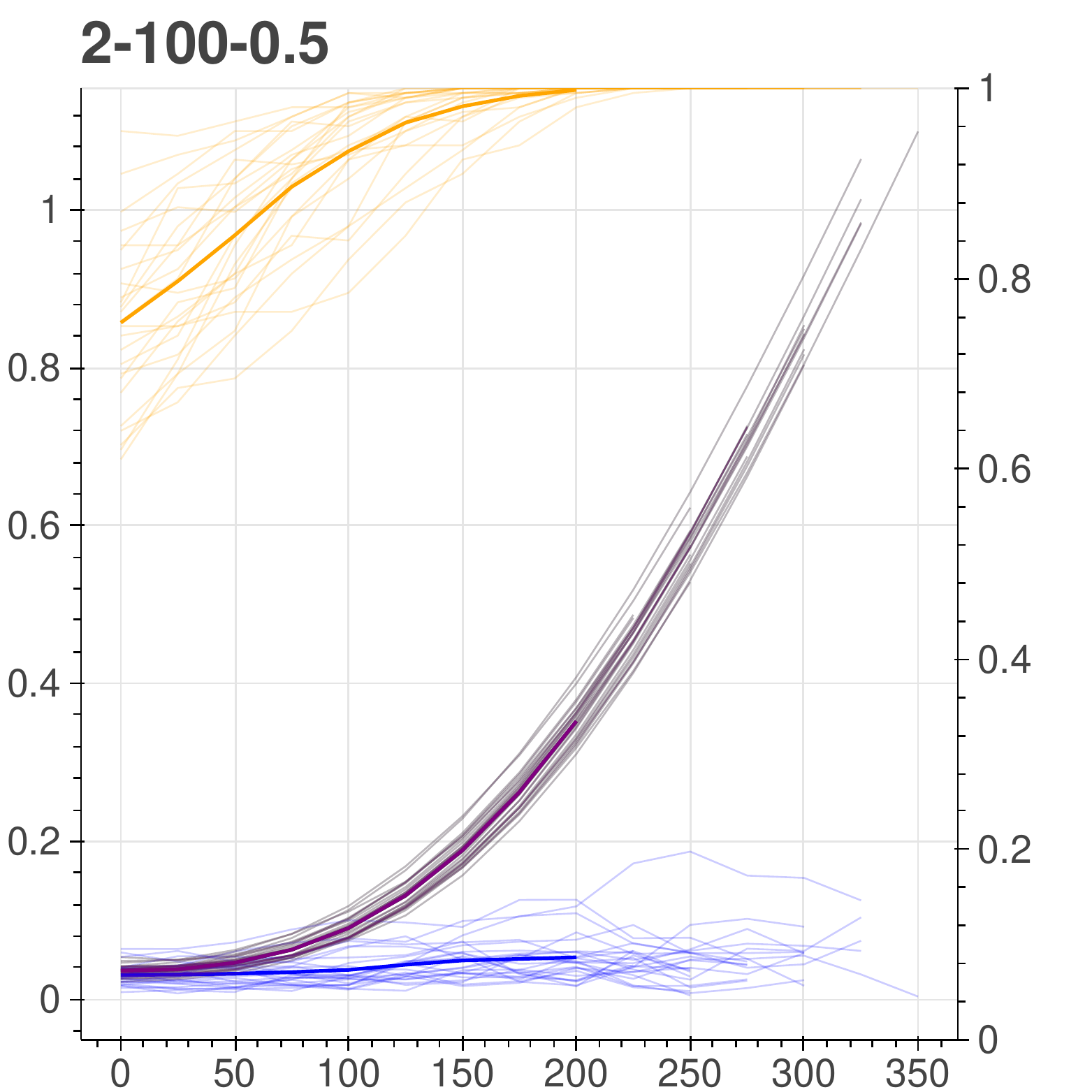}

\includegraphics[width=0.24\linewidth]{figures/wo-ptr/2-500-0.04.pdf}
\includegraphics[width=0.24\linewidth]{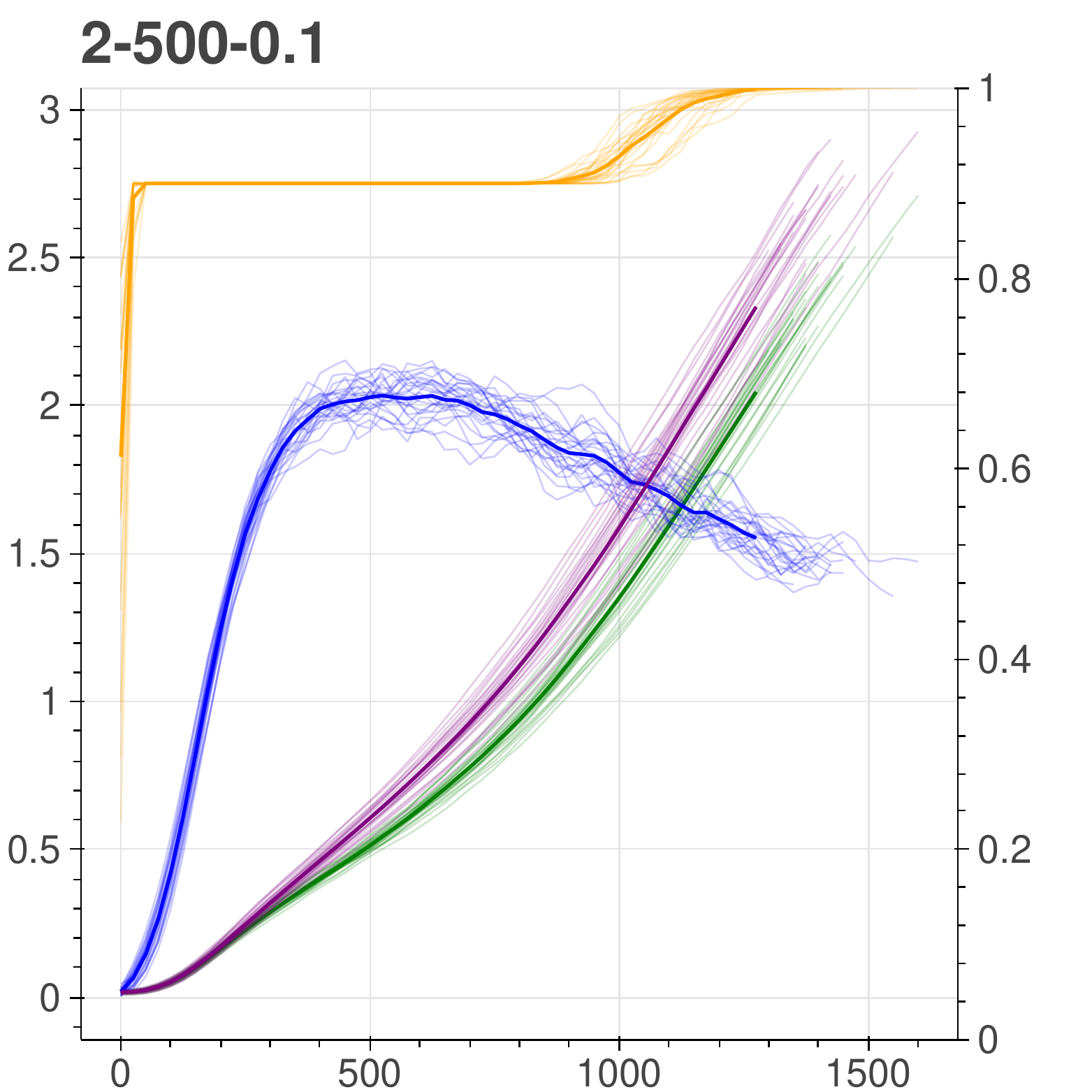}
\includegraphics[width=0.24\linewidth]{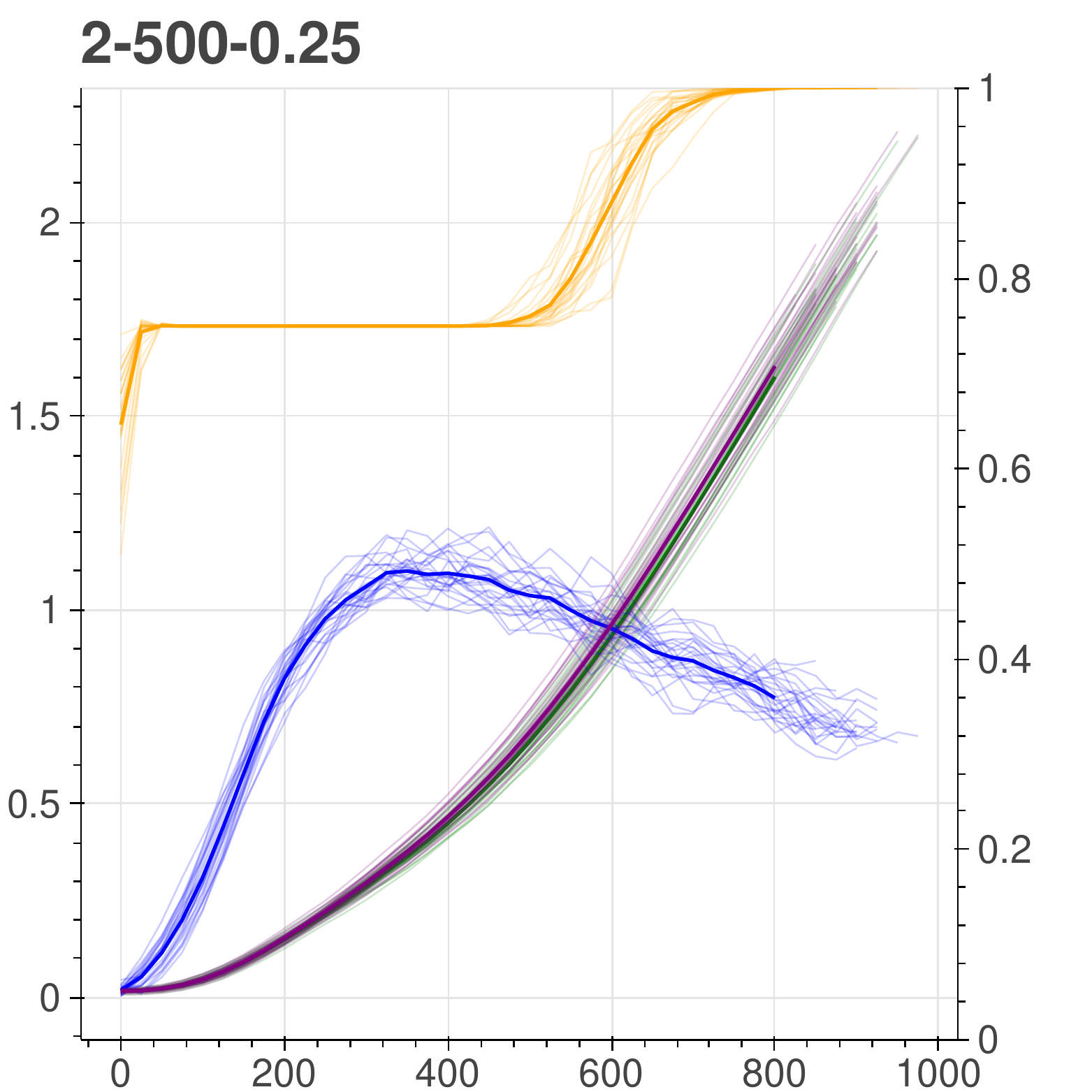}
\includegraphics[width=0.24\linewidth]{figures/wo-ptr/2-500-0.5.pdf}

\caption{The average weights over the features during training a two-layer model without pretraining.
From left to right, $\nu = 0.04, 0.10, 0.25, 0.5$.
From top to bottom, $d_1 = 50, 100, 500$.
Blue, green, purple curves represent the average weights over features in \textcolor{blue}{$X_1$}, \textcolor{DarkGreen}{$X_2$}, and \textcolor{purple}{$\dot{X}_2$} (the middle part of $X_2$) respectively. The orange curve represents the \textcolor{orange}{accuracy}.}
\label{fig:curves-wo-ptr}
\end{figure*}

\begin{figure*}
\includegraphics[width=0.24\linewidth]{figures/w-ptr/2-50-0.04.pdf}
\includegraphics[width=0.24\linewidth]{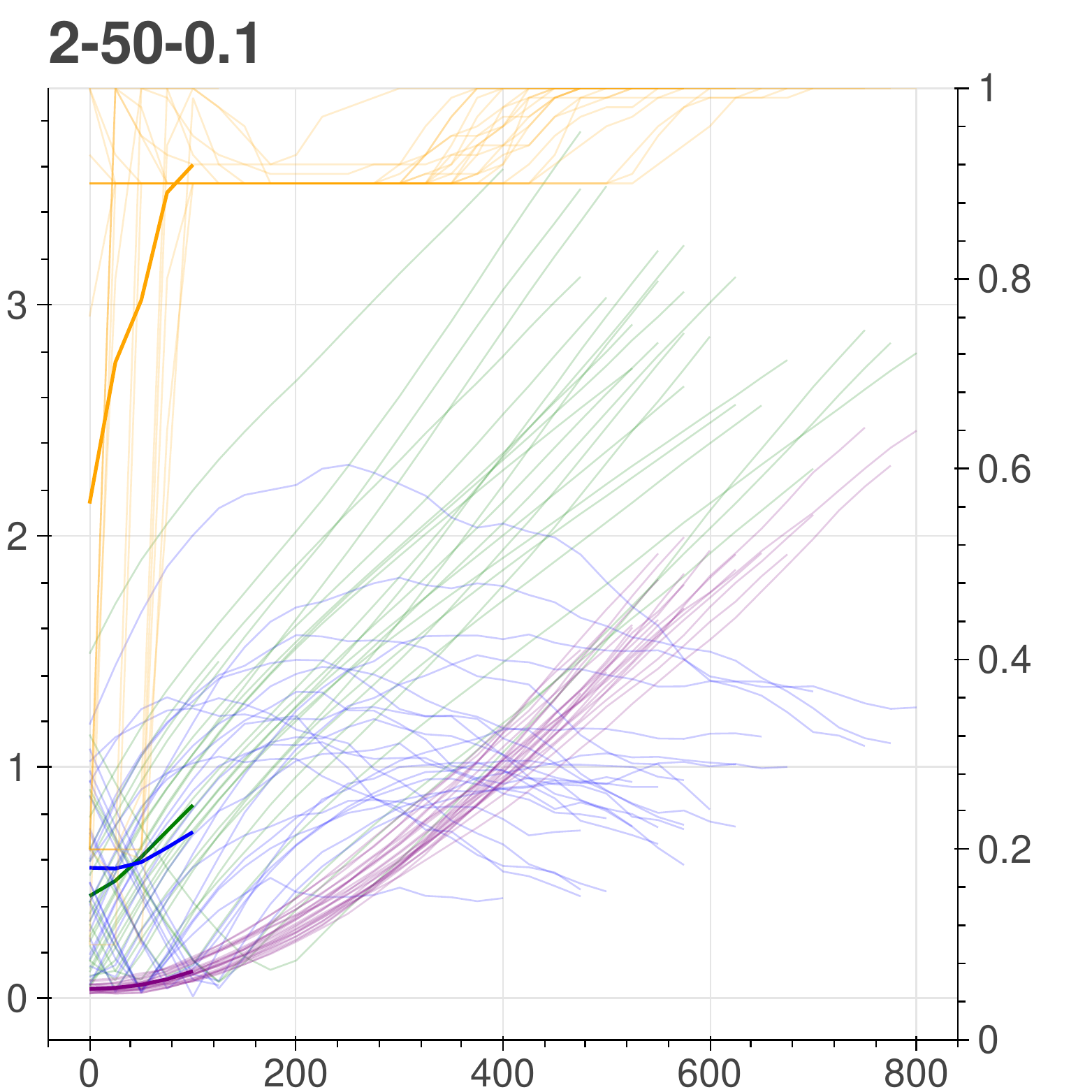}
\includegraphics[width=0.24\linewidth]{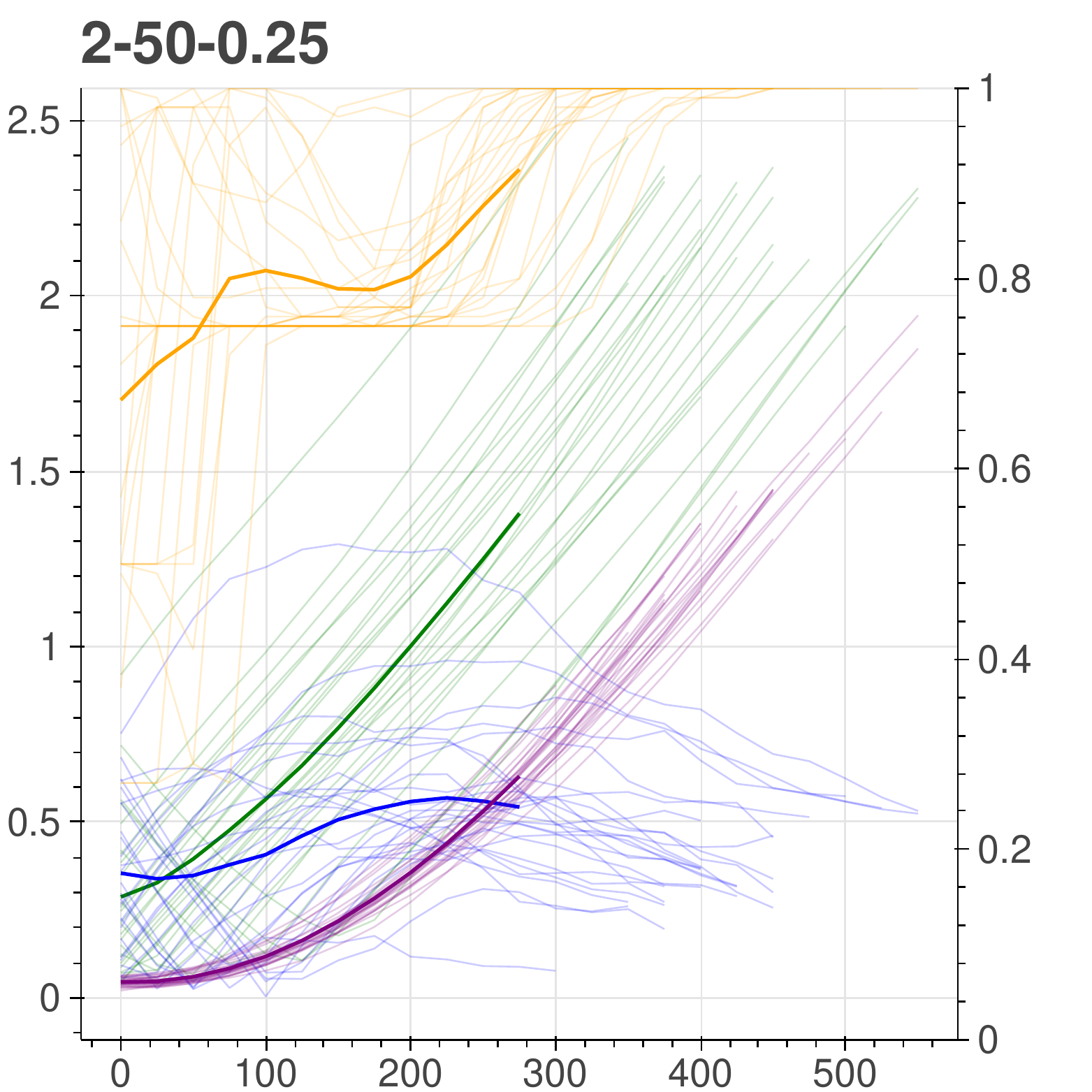}
\includegraphics[width=0.24\linewidth]{figures/w-ptr/2-50-0.5.pdf}

\includegraphics[width=0.24\linewidth]{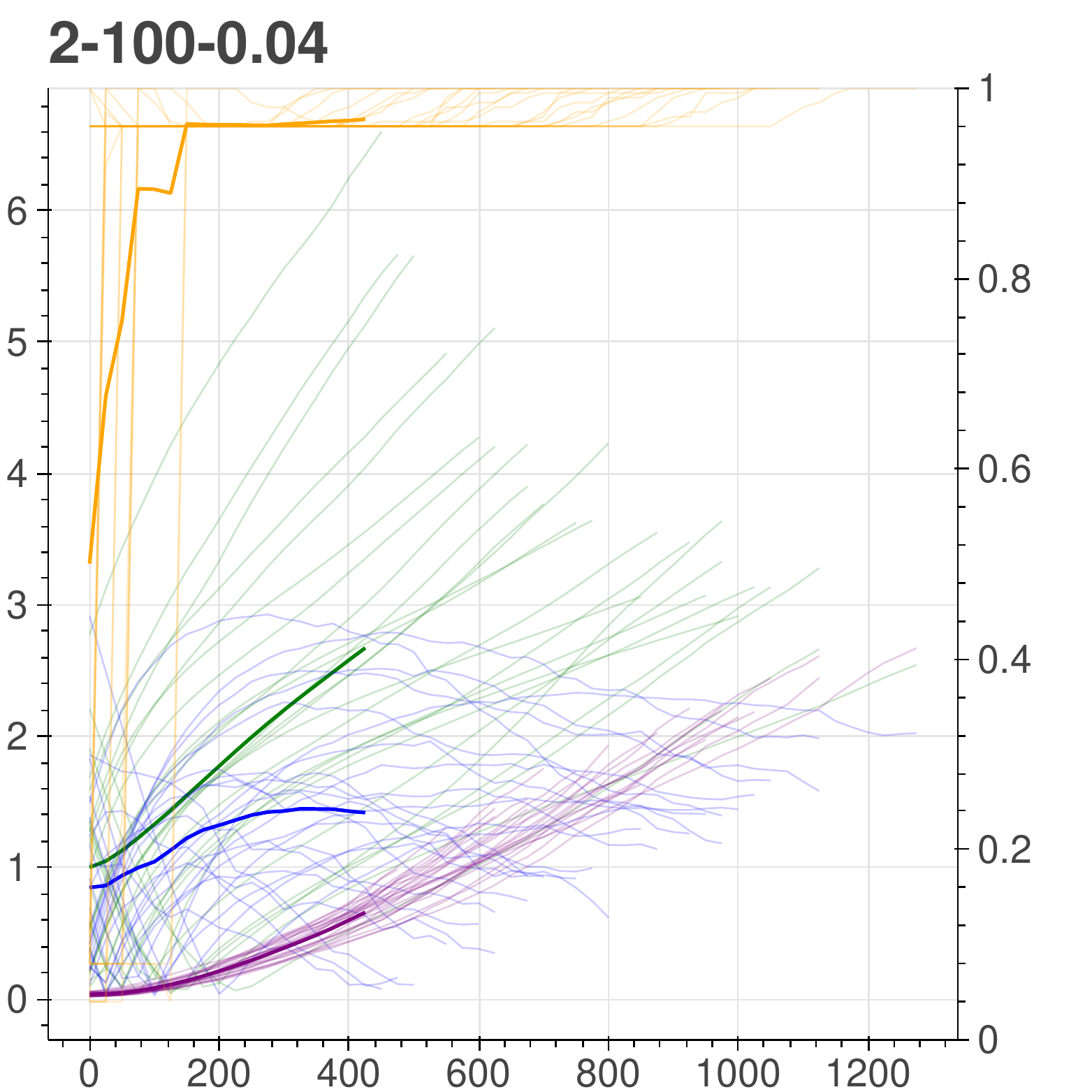}
\includegraphics[width=0.24\linewidth]{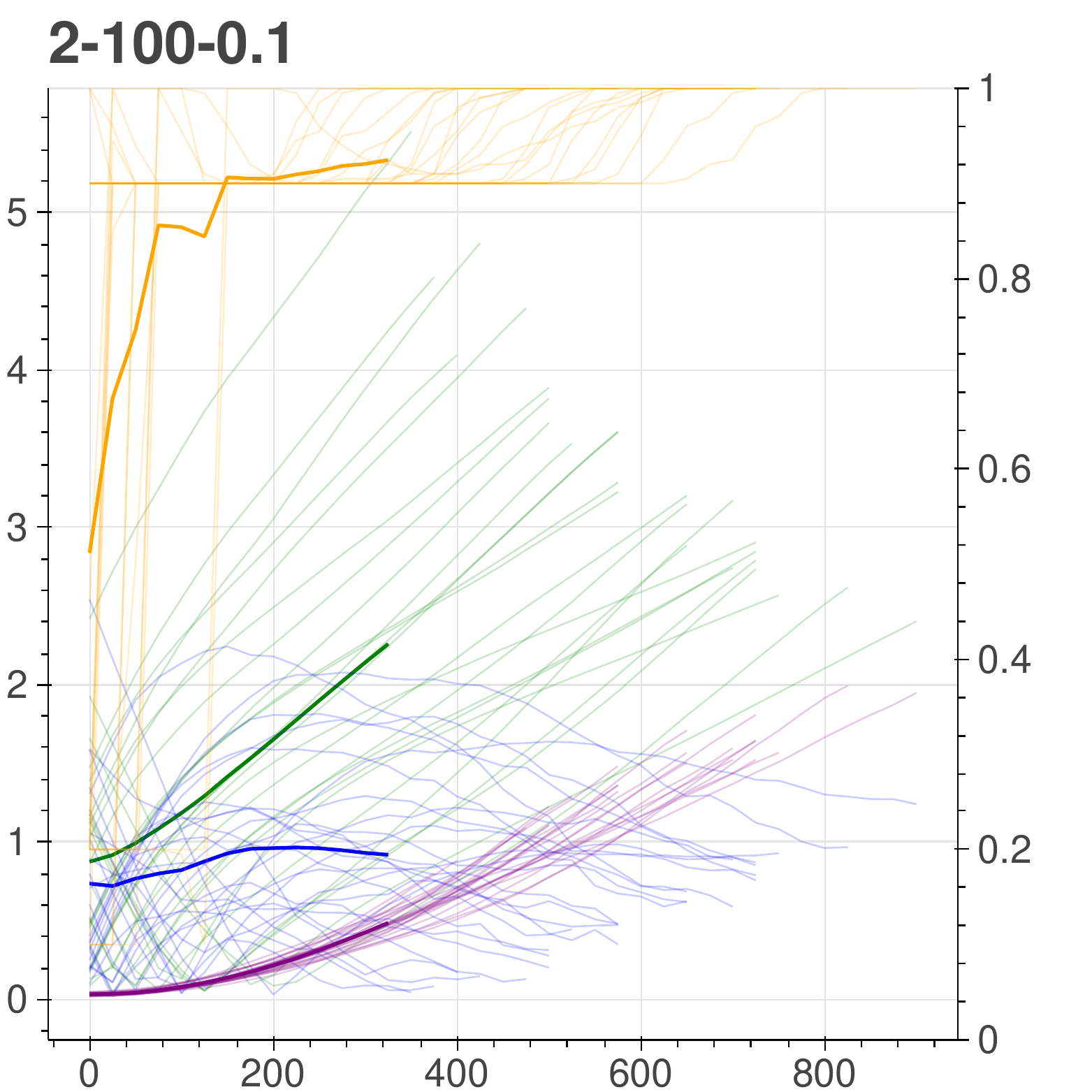}
\includegraphics[width=0.24\linewidth]{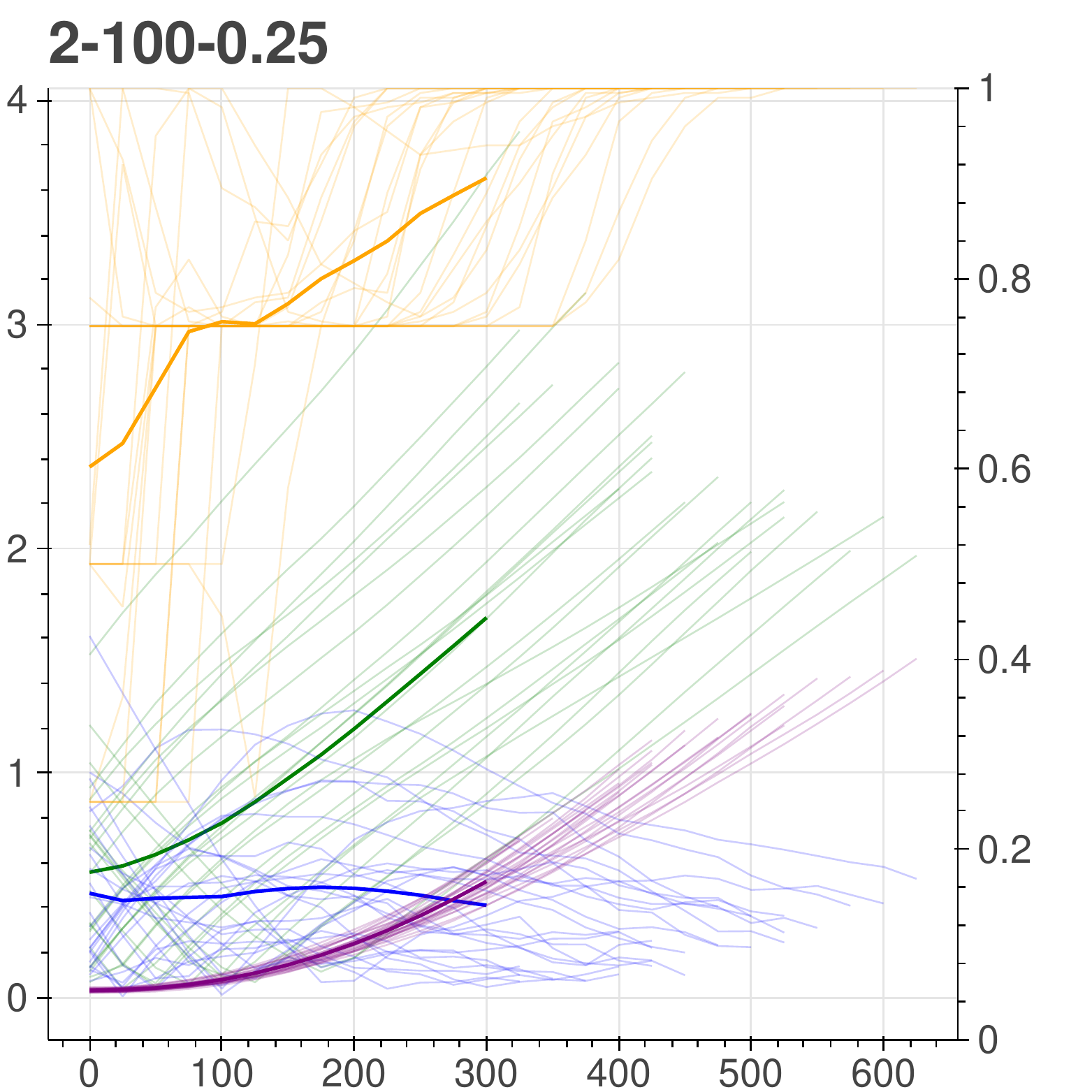}
\includegraphics[width=0.24\linewidth]{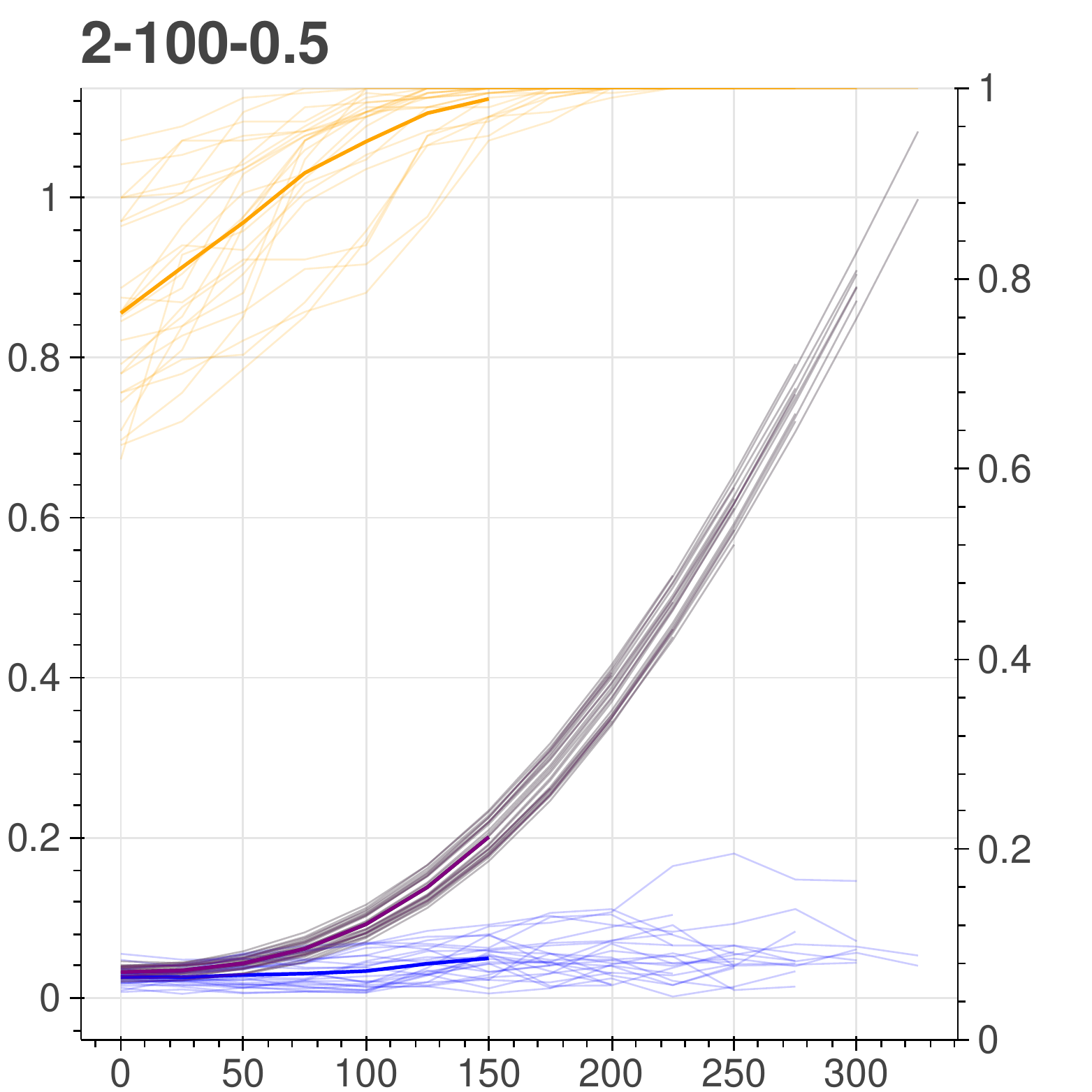}

\includegraphics[width=0.24\linewidth]{figures/w-ptr/2-500-0.04.pdf}
\includegraphics[width=0.24\linewidth]{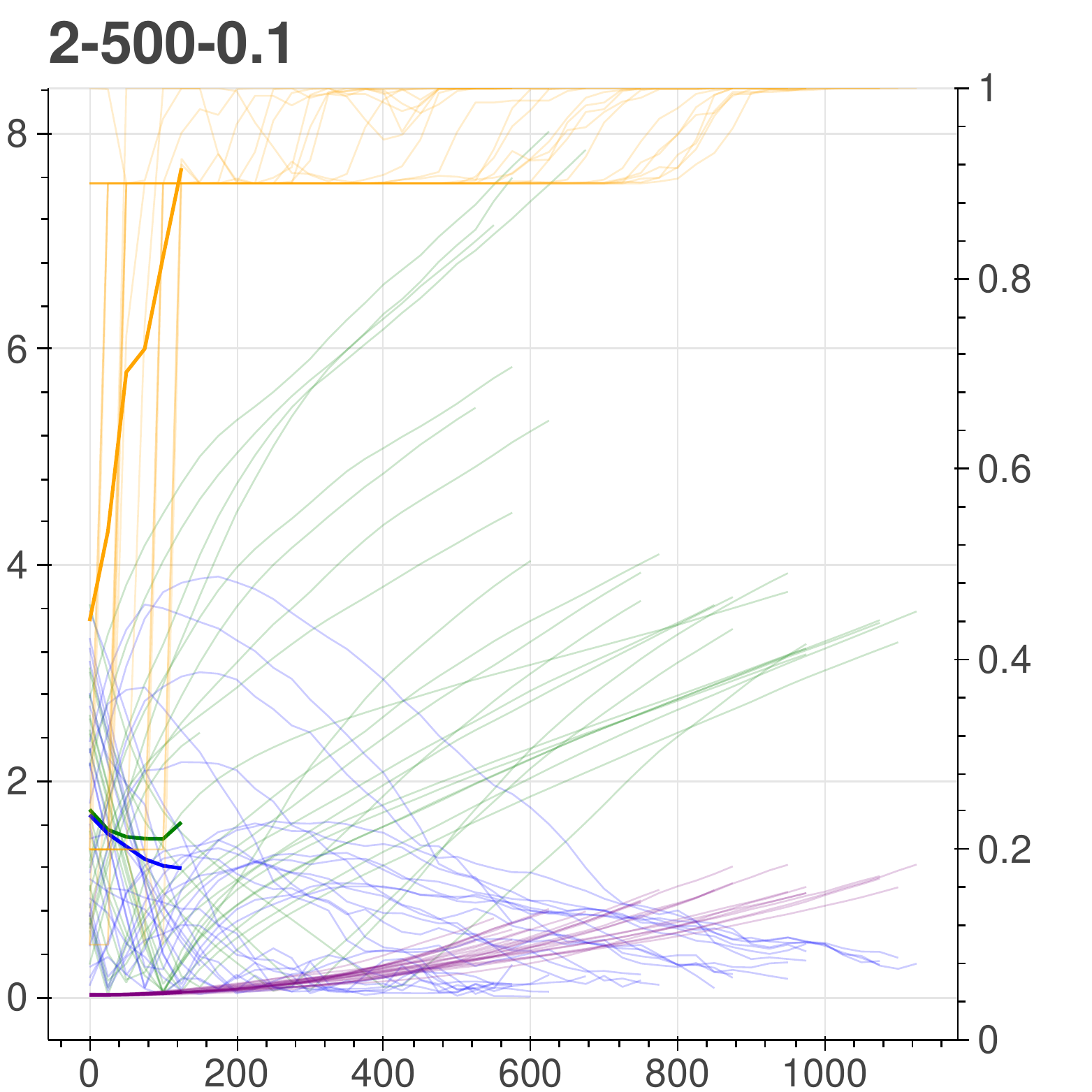}
\includegraphics[width=0.24\linewidth]{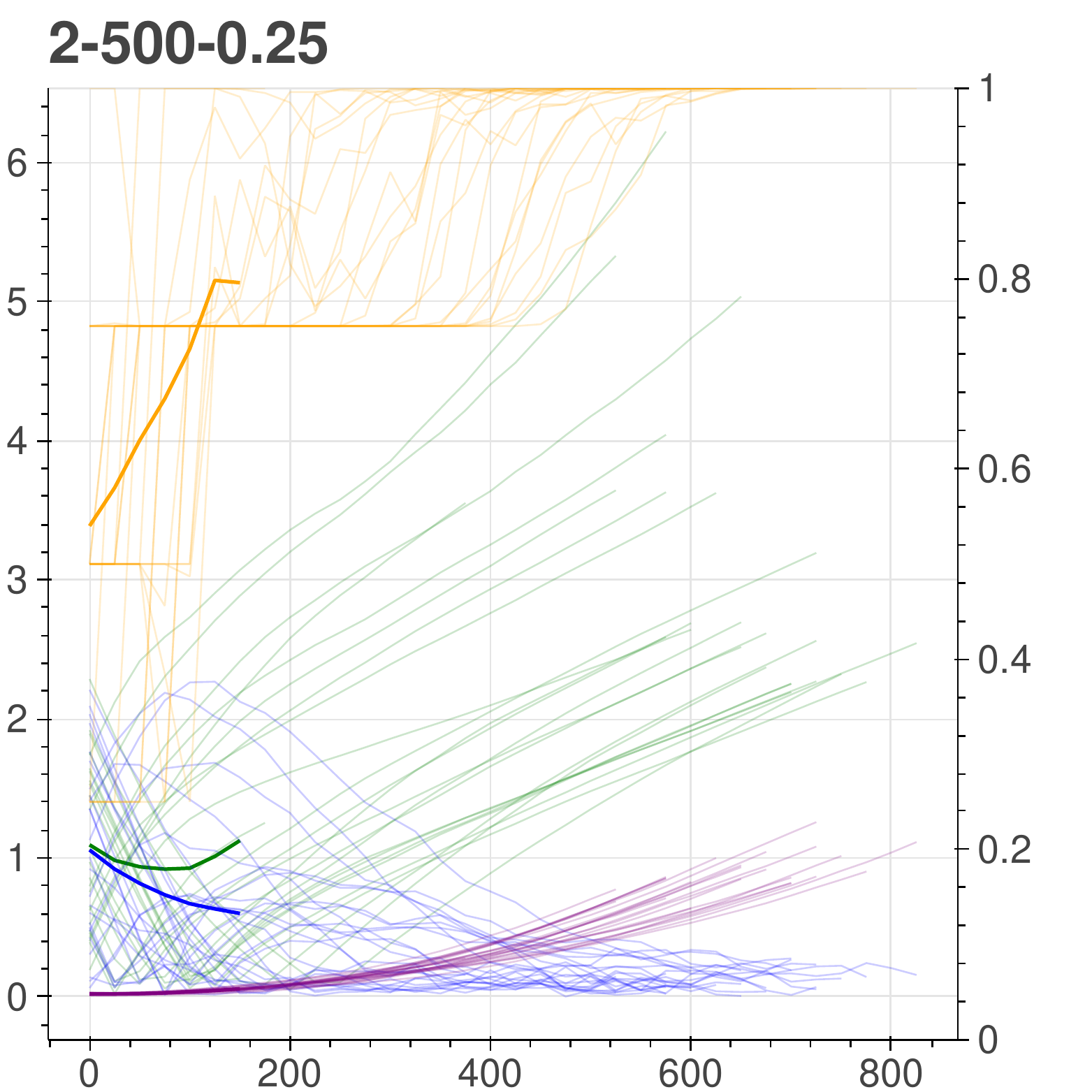}
\includegraphics[width=0.24\linewidth]{figures/w-ptr/2-500-0.5.pdf}
\caption{The average weights over the features during training a two-layer model with pretraining.
From left to right, $\nu = 0.04, 0.10, 0.25, 0.5$.
From top to bottom, $d_1 = 50, 100, 500$.
Blue, green, purple curves represent the average weights over features in \textcolor{blue}{$X_1$}, \textcolor{DarkGreen}{$X_2$}, and \textcolor{purple}{$\dot{X}_2$} (the middle part of $X_2$) respectively. The orange curve represents the \textcolor{orange}{accuracy}.}
\label{fig:curves-w-ptr}
\end{figure*}

\end{document}